\newcommand\vldbdoi{10.14778/3476249.3476295}
\newcommand\vldbpages{XXX-XXX}
\newcommand\vldbvolume{14}
\newcommand\vldbissue{11}
\newcommand\vldbyear{2021}
\newcommand\vldbavailabilityurl{http://vldb.org/pvldb/format_vol14.html}
\newcommand\vldbpagestyle{empty}
\newcommand{\sys}{\textsc{Grain}\xspace}
\newcommand{\para}[1]{{\vspace{2pt} \bf \noindent #1 \hspace{1pt}}}
\definecolor{codegray}{rgb}{0.5,0.5,0.5}
\begin{document}
\fancyhead{}
\title{\sys: Improving Data Efficiency of \emph{Gra}ph Neural Networks via Diversified \emph{In}fluence Maximization}

\renewcommand{\shorttitle}{\sys}
\author{Wentao Zhang$^{\dagger\ddagger}$, Zhi Yang$^{\dagger \S}$, Yexin Wang$^\dagger$, Yu Shen$^\dagger$, Yang Li$^\dagger$, Liang Wang$^{\dagger}$, Bin Cui$^{\dagger \S}$}
\affiliation{
{{$^\dagger$}{School of EECS \& Key Laboratory of High Confidence Software Technologies, Peking University}}~~~~~$^\ddagger$Tencent Inc.\\
 {{$^{\S}$}{Center for Data Science, Peking University \& National Engineering Laboratory for Big Data Analysis and Applications}}}
\affiliation{
$^\dagger$\{wentao.zhang, yangzhi, yexinwang, shenyu, liyang.cs, bin.cui\}@pku.edu.cn $^\dagger$ liang\_wang\_now@yahoo.com
}
\renewcommand{\shortauthors}{Zhang et al.}



\begin{abstract}
Data selection methods, such as active learning and core-set selection, are useful tools for improving the data efficiency of deep learning models on large-scale datasets. 
However, recent deep learning models have moved forward from independent and identically distributed data to graph-structured data, such as social networks, e-commerce user-item graphs, and knowledge graphs. 
This evolution has led to the emergence of Graph Neural Networks (GNNs) that
go beyond the models existing data selection methods are designed for. 
Therefore, we present \sys, an efficient framework that opens up a new perspective through connecting data selection in GNNs with \emph{social influence maximization}. 
By exploiting the common patterns of GNNs, \sys introduces a novel feature propagation concept, a diversified influence maximization objective with novel influence and diversity functions, and a greedy algorithm with an approximation guarantee into a unified framework. 
Empirical studies on public datasets demonstrate that \sys significantly improves both the performance and efficiency of data selection (including active learning and core-set selection) for GNNs. 
To the best of our knowledge, this is the first attempt to bridge two largely parallel threads of research, data selection, and social influence maximization, in the setting of GNNs, paving new ways for improving data efficiency.
\end{abstract}

\settopmatter{printfolios=false}
\maketitle
\pagestyle{\vldbpagestyle}
\begingroup\small\noindent\raggedright\textbf{PVLDB Reference Format:}\\
Wentao Zhang, Zhi Yang, Yexin Wang, Yu Shen, Yang Li, Liang Wang, Bin Cui. Grain: Improving Data Efficiency of Graph Neural Networks via Diversified Influence Maximization. PVLDB, \vldbvolume(\vldbissue): \vldbpages, \vldbyear.\\
\href{https://doi.org/\vldbdoi}{doi:\vldbdoi}
\endgroup
\begingroup
\renewcommand\thefootnote{}\footnote{\noindent
This work is licensed under the Creative Commons BY-NC-ND 4.0 International License. Visit \url{https://creativecommons.org/licenses/by-nc-nd/4.0/} to view a copy of this license. For any use beyond those covered by this license, obtain permission by emailing \href{mailto:info@vldb.org}{info@vldb.org}. Copyright is held by the owner/author(s). Publication rights licensed to the VLDB Endowment. \\
\raggedright Proceedings of the VLDB Endowment, Vol. \vldbvolume, No. \vldbissue\ %
ISSN 2150-8097. \\
\href{https://doi.org/\vldbdoi}{doi:\vldbdoi} \\
}\addtocounter{footnote}{-1}\endgroup

\ifdefempty{\vldbavailabilityurl}{}{
\vspace{.3cm}
\begingroup\small\noindent\raggedright\textbf{PVLDB Availability Tag:}\\
The source code of this research paper has been made publicly available at \textbf{\url{https://github.com/zwt233/Grain}}.
\endgroup
}

\section{Introduction}
Data selection methods, such as active learning and core-set selection, improve the data efficiency of deep learning on large datasets by identifying the most informative training examples.
In particular, active learning assists the learning procedure by prioritizing the selection of valuable unlabeled samples for human labeling, under the goal of maximizing the model performance with minimal labeling cost~\cite{settles2009active,Charu2014als,dasgupta2005analysis,li2013adaptive,Bligic2010alnd,Golovin2011asta}. 
Core-set selection techniques aim to find a small subset that accurately approximates the full dataset by selecting representative examples~\cite{kushal2007sck,huggins2016csblr,campbell2018bayesian,wei2014sssls,ni2015udsw,Wei2013udst}.

Graph Neural Networks (GNNs) have achieved state-of-the-art performance across various graph-based tasks such as node classification~\cite{zhang2020reliable,velivckovic2017graph,chen2018fastgcn,chiang2019cluster} and link prediction~\cite{he2020lightgcn,cui2020adaptive,wu2020garg,wu2020graph}.
However, GNNs require plenty of labeled data to achieve satisfactory performance, and substantial training times. 
Therefore, introducing data selection methods for GNNs is crucial.
Unfortunately, existing data selection methods fail to address the following new challenges posed by GNNs in terms of both performance and efficiency:

First, GNNs are graph-based semi-supervised learning models, which aggregate the feature of a node with its nearby neighbors. 
However, most existing data selection methods are designed to learn models on independent and identically distributed (i.i.d) data, which fail to model the interactions imposed by the graph structure. 
Several attempts~\cite{prasad2014submodular,long2008graph} have been made for data selection on graphs, but they are ineffective to GNNs since they cannot capture both graph structure and node features.
As a result, it is necessary to design a new data selection criterion that is in coherence with the characteristics of GNNs to select the most valuable samples. 

Second, GNNs incur substantial training costs and are hard to scale to large graphs because they need to perform a recursive neighborhood expansion to compute the hidden representations of a given node. 
Most existing active learning techniques select a batch of samples with guidance from the previously trained model and have 
to retrain a computationally-expensive model once a new labeled example comes.
So it is expensive to directly apply them to GNNs, which hampers their applicability in real-world applications.

In this paper, we propose \sys, a novel data selection framework towards efficient data selection for GNNs.
The working philosophy of \sys is to (1) take the essential component of GNNs -- feature propagation -- as a type of influence propagation, and (2) maximize the efficiency of feature propagation with analogy to \emph{social influence maximization}~\cite{krause2008near}. 
Concretely, we measure the sensitivity of node $v$ to node $u$ (i.e., the ``influence'' of $u$ on $v$) by computing how much the input feature of $u$ affects the aggregated feature of $v$ through feature propagation. 
Then we \emph{maximize the influence} of the selected nodes (i.e., seed nodes) to get more unlabeled nodes influenced and involved in the downstream model training.
In contrast to the traditional methods that consider informative scores of individual nodes, \sys is more effective in exploiting the interaction among nodes, by \emph{explicitly} maximizing the number of unlabeled nodes influenced by labeled ones.

Although the definition of feature influence is inspired by ~\cite{wang2020unifying,DBLP:conf/icml/XuLTSKJ18}, the non-obvious observation that data selection for GNNs can be modeled as influence maximization is a major contribution. Moreover, we are making contributions in bridging these two
largely parallel threads of research, active learning
and social influence maximization, in new GNN settings:
First, unlike the classic linear threshold (LT) model and independent cascade (IC) model~\cite{kempe2003maximizing} which use social influence maximization, we define a novel feature influence-based propagation model.
Besides considering the direct influence via feature propagation, we further recognize the \emph{indirect influence} of GNNs over feature space: nodes that are close in feature space are likely to have the same label. 
Thus, when a node is influenced via direct feature propagation, the propagated influence/label signal can also be viewed as indirectly interpolating a smooth signal~\cite{li2018deeper} to nodes with close feature distance from the influenced nodes in feature space. 
To further improve the effectiveness of feature influence, we incorporate the diversity of influenced crowds to enforce the indirect influence, which encourages the influenced nodes to cover more semantic categories. 

Based on the above perspective, we propose a novel data selection criterion for GNNs by unifying both the magnitude and the diversity of the influenced crowd into a \emph{Diversified Influence Maximization} (DIM) criterion. 
For effective influence maximization, we construct new influence and diversity functions in \sys framework.
We also prove that the proposed diversity functions and the objective functions have good properties of submodularity and monotonicity, which enable a simple greedy algorithm to find a near-optimal result.
Meanwhile, \sys has the advantages of high efficiency and scalability over the existing methods, as it reduces the training cost by separating feature propagation from the training of GNNs. 

 
In summary, the core contributions of this paper are:
(1) {\textbf{Problem Connection.}} We open up a novel perspective for improving the data efficiency of GNNs by connecting GNN active learning with social influence maximization.
(2) {\textbf{New Criterion.}} We propose a fundamentally new data selection criterion--\emph{diversified influence maximization}--for GNNs by considering both the magnitude and diversity of feature influence.
Moreover, we propose a greedy algorithm to solve this problem with an approximation guarantee.
(3) {\textbf{Novel Functions.}} 
We propose novel influence and diversity functions based on both direct and indirect influence of GNNs over the graph and feature space, respectively, which have good properties of submodularity and monotonicity.
(4) \textbf{High Performance and Efficiency.}
Through experiments on real-world graphs with typical GNNs, we demonstrate that \sys significantly outperforms the state-of-the-art baselines on both performance and efficiency.

\section{Preliminary}
In this section, we first describe the notations and define two data selection problems: Active Learning and Core-set Selection. Then we introduce GNNs and social influence maximization.

\subsection{Data Selection Problems}
We are given a graph $\mathcal{G}=(\mathcal{V},\mathcal{E})$ with $|\mathcal{E}|=N$ nodes, and its adjacent matrix $\mathbf{A}\in\mathbb{R}^{N\times N}$. 
Each node $v_i \in \mathcal{V}$ is associated with a feature vector $\boldsymbol{x}_i \in \mathbb{R}^d$, which forms the feature matrix $\mathbf{X} \in \mathbb{R}^{N\times d}$. 
The ground-truth label for node $v_i$ is a one-hot vector $\boldsymbol{y}_i \in \mathbb{R}^{C}$ where $C$ is the number of classes and the $c$-th element is 1 only if node $v_i$ belongs to class $c$. 
The entire node set $\mathcal{V}$ is partitioned into the training set $\mathcal{V}_{train}$, 
validation set $\mathcal{V}_{val}$ and test set $\mathcal{V}_{test}$. 
The training algorithm is denoted as $M$, which is GNN in this paper.

\textit{\underline{Active Learning.}}
Given the unlabeled node set $\mathcal{V}_{train}$ and a loss function $\ell$ (e.g., generalization error), the goal of active learning is to select a subset of $\mathcal{B}$ nodes $S$ from $\mathcal{V}_{train}$ to label so that the lowest loss on the test set $\mathcal{V}_{test}$ can be achieved after applying $S$ to algorithm $M$. 
The formulation of active learning is as follows:
\begin{small}
\begin{equation}
\mathop{\arg\min}_{S:|S|=\mathcal{B}}\mathbb{E}_{v_i\in \mathcal{V}_{test}}\left[\ell\left(\boldsymbol{y}_i, P(\hat{\boldsymbol{y}}_i|\boldsymbol{x}_i;M_S)\right)\right],
\label{eq:target}
\end{equation}
\end{small}
where $P(\hat{\boldsymbol{y}}_i|\boldsymbol{x}_i;M_S)$ is the label distribution of node $v_i$ predicted by $M_S$, and $M_S$ is trained under the supervision of the labeled set $S$.

Recent researches~\cite{gal2017deep,sener2018active,kirsch2019batchbald,zhang2021alg} focus on using deep learning models as the learning algorithm $M$ and solve the problem of active learning in the batch setting.
During $b$ rounds, they select $\frac{\mathcal{B}}{b}$ data points to label in each round. We denote the selected points during the $i$-th round as $s_i$. At the $k$-th round, these methods refit the learning algorithm $M$ on all the labeled data $\bigcup_{i \in [k]}s_i$ to avoid any correlation between selections~\cite{kirsch2019batchbald, Snapshot_boosting}. Specifically, Wolf et. al.~\cite{wolf2011facility}, and Sener et. al.~\cite{sener2018active} propose the greedy k-centers method to select data points that maximize the minimal distance between the current point and the labeled ones. Settles~\cite{settles2012active}, Shen et. al.~\cite{shen2017deep}, and Gal et. al.~\cite{gal2017deep} select points based on predictive confidence, i.e., the highest probability belonging to a certain class predicted by $M$. 

While the above methods concentrate on supervised learning, AGE~\cite{cai2017active} and ANRMAB~\cite{gao2018active} are proposed for active learning on semi-supervised scenarios (e.g., graph) by using both the node features and graph structure.
Recently, clustering-based active learning methods have been proposed for GNNs, such as Featprop~\cite{wu2019active} and LSCALE~\cite{liu2020active}. Featprop uses propagated node features as representations to cluster unlabeled nodes and labels the cluster centers. 
LSCALE proposes a learned latent space for clustering, which combines the unsupervised learning features and supervised hidden representations to select nodes for labeling. 
Compared with the related work, \sys fully utilizes the
hidden information of unlabeled nodes, by \emph{explicitly} maximizing the number of unlabeled nodes influenced by labeled ones over the graph and feature space (i.e., getting more unlabeled nodes involved in GNN training).
 
\textit{\underline{Core-set Selection.}}
Suppose that $\mathcal{V}_{train}$ is full labeled, core-set selection can be defined as methods that search for a subset of data points that maintain a similar level of quality (e.g., classification error) with the entire training set $\mathcal{V}_{train}$. Concretely, the goal of core-set selection is to select a subset of $\mathcal{B}$ nodes $S \subset \mathcal{V}_{train}$ that achieves comparable result as the whole training set $\mathcal{V}_{train}$, that is:

\begin{footnotesize}
\begin{equation}
\mathop{\arg\min}_{S:|S|=\mathcal{B}} \mathbb{E}_{v_i\in \mathcal{V}_{test}} \left[\ell\left(\boldsymbol{y}_i, P(\hat{\boldsymbol{y}}_i|\boldsymbol{x}_i;M_S)\right) - \ell\left(\boldsymbol{y}_i, P(\hat{\boldsymbol{y}}_i|\boldsymbol{x}_i;M_{\mathcal{V}_{train}})\right)\right],
\end{equation}
\end{footnotesize}
where $M_S$ and $M_{\mathcal{V}_{train}}$ are algorithms trained under the supervision of the labeled set $S$ and the full training set $\mathcal{V}_{train}$ respectively.

Besides the greedy k-centers method~\cite{wolf2011facility,sener2018active} mentioned above, we introduce another two core-set selection techniques in previous work: forgetting events, and max entropy.
Toneva et. al.~\cite{toneva2018empirical} define forget events as the number of times an example is incorrectly classified after having been correctly classified earlier during training and select the points with the highest number of forgetting events. 
Lewis et. al.~\cite{lewis1994sequential} and Settles~\cite{settles2012active} rank the entropy of the predictions from algorithm $M$ and keep the points with the highest entropy.

\subsection{Graph Neural Networks}

Graph Neural Networks (GNNs)~\cite{DBLP:conf/iclr/KipfW17,hamilton2017inductive,velivckovic2017graph} define a multi-layer message passing process, through which the feature representation of a node in the next layer could be the aggregation of its neighborhood in the current layer. They differ in the way of defining the recursive function $f$ for message passing:
\begin{small}
\begin{equation}
   \mathbf{X}^{(k)} \gets f(\mathbf{X}^{(k-1)}, \mathbf{A}, \mathbf{\Theta}^{(k)}),
\end{equation}
\end{small}
where $\mathbf{X}^{(k)}$ and $\mathbf{\Theta}^{(k)}$ are the output embedding and trainable parameters of layer $k$. Naturally, the input $\mathbf{X}^{(0)}$ satisfies $\mathbf{X}^{(0)}=\mathbf{X}$.

For example, Graph
Convolution Network (GCN)~\cite{DBLP:conf/iclr/KipfW17} has a specific form of the function $f$ as:
\begin{small}
\begin{equation}
    \mathbf{X}^{(k)} = \delta \left(\widetilde{\mathbf{D}}^{-\frac{1}{2}}\widetilde{\mathbf{A}}\widetilde{\mathbf{D}}^{-\frac{1}{2}}\mathbf{X}^{(k-1)}\mathbf{\Theta}^{(k)}\right),
\end{equation}
\end{small}
where $\widetilde{\mathbf{D}}$ is the diagonal degree matrix of $\widetilde{\mathbf{A}}=\mathbf{A}+\mathbf{I}_{N}$, and $\mathbf{I}_{N}$ is the identity matrix.
$\mathbf{\Theta}^{(k)}$ is the layer-specific trainable weight matrix and $\delta(\cdot)$ is the non-linear activation function. 
For classification tasks, a K-layer GCN applies a softmax function on the aggregated representation in the final layer to obtain the prediction score for each class. That is,
$
\hat{\mathbf{Y}}=\text{softmax}
\left(\widetilde{\mathbf{D}}^{-\frac{1}{2}}\widetilde{\mathbf{A}}\widetilde{\mathbf{D}}^{-\frac{1}{2}}\mathbf{X}^{(K-1)}\mathbf{\Theta}^{(K)} \right).
$

\subsection{Social Influence Maximization}
The influence maximization (IM) problem in social networks aims to select $\mathcal{B}$ nodes so that the number of nodes activated (or influenced) in the social networks is maximized~\cite{kempe2003maximizing}. 
Namely, given a graph $\mathcal{G} = (\mathcal{V}, \mathcal{E})$, the formulation is as follows:
\begin{equation}
\begin{aligned}
\max_S\ {|\sigma(S)}|,\ \mathbf{s.t.}\ {S}\subseteq \mathcal{V},\ |{S}|=\mathcal{B},
\end{aligned}
\label{max_thetaS}
\end{equation}
where $\sigma(S)$ is the set of nodes activated by the seed set $S$ under certain influence propagation models, such as Linear Threshold (LT) and Independent Cascade (IC) models~\cite{kempe2003maximizing}.
The maximization of $\sigma(S)$ is NP-hard. However, if $\sigma(S)$ is nondecreasing and submodular with respect to $S$, a greedy algorithm can provide an approximation guarantee of $(1 - \frac{1}{e})$~\cite{nemhauser1978analysis}.
Great efforts have been devoted to the IM problem ~\cite{jung2012irie,yang2012approximation,chen2013information, zhao2021maximizing}, and some researches also explore the diversity over the activated nodes in social influence maximization~\cite{krause2008near,lin2011class, dey2013contextual}. But they are designed for specific models or tasks and cannot be applied to GNNs directly. 

This paper focuses on \emph{bridging} data selection
and diversified influence maximization (DIM) in new GNN settings, rather than addressing classic social influence maximization problems in previous work. We propose a novel propagation model and diversity functions by exploiting the characteristics of GNNs, including both direct and indirect feature influence. 
We further propose a fundamentally new selection criterion by connecting GNN data selection with DIM, and demonstrate that it outperforms traditional learning-based AL approaches (i.e., achieves state-of-the-art performance for GNN data selection problem). 
\sys leverages the literature dedicated to (diversified) social influence maximization and exhibits both the feasibility and potential of such connection, which opens up a promising future direction in GNN-based active learning.

\section{\sys{} Framework}
In this section, we present \sys, a new data selection framework for GNNs.
As illustrated in Figure~\ref{fig:framework}, at each round of data selection, \sys takes as input a graph $\mathcal{G}=\mathcal{(V,E)}$, feature matrix $\mathbf{X}$, and computes the proposed influence and diversity measurements based on the direct and indirect influence over graph structure and $k$-steps aggregated embedding $\mathbf{X}^{(k)}$, respectively.
Next, \sys combines the influence and diversity into a unified criterion — maximizing the diversified influence and selects a node based on this new criterion. 
The procedure is repeated until the labeling budget $\mathcal{B}$ exhausts.
Below, we introduce each component of \sys in detail.

\subsection{Feature Influence Model}
\textit{\underline{Decoupled Feature Propagation.}} Graph neural networks define a multi-layer feature propagation process. 
While feature propagation and transformation might be intertwined in many GNNs,
recent studies have observed that GNNs primarily derive their benefits from performing feature
smoothing over graph neighborhoods rather than learning non-linear hierarchies of features as implied by the analogy to CNNs~\cite{sign_icml_grl2020,he2020lightgcn,wu2019simplifying}.
Thus, we separate the essential operation of
GNNs — feature propagation — inherited from GNNs by removing the neural network $\mathbf{\Theta}$ and
non-linear activation for feature transformation. 
Specifically, we construct a parameter-free $K$-step feature propagation process for a target $K$-layer GNN:
\begin{small}
\begin{equation}
    \mathbf{X}^{(k)} \gets f(\mathbf{X}^{(k-1)}, \mathbf{T}, \mathbf{X}^{(0)}),\ \forall k=1,\ldots, K. 
    \label{equ:prop}
\end{equation}
\end{small}
where $\mathbf{T}$ is the generalized transition matrix
, e.g., the random walk transition matrix
 $\mathbf{T}_{rw}= \tilde{\mathbf{D}}^{-1}\tilde{\mathbf{A}}$
, the symmetric transition matrix $\mathbf{T}_{sym} =\tilde{\mathbf{D}}^{-\frac{1}{2}}\tilde{\mathbf{A}}\tilde{\mathbf{D}}^{-\frac{1}{2}}$, the triangle. IA matrix $\mathbf{T}_{tr} = \mathbf{D}_T^{-1}\mathbf{A}_T$, etc.

\begin{figure}[t]
\centering
\includegraphics[width=.98\linewidth]{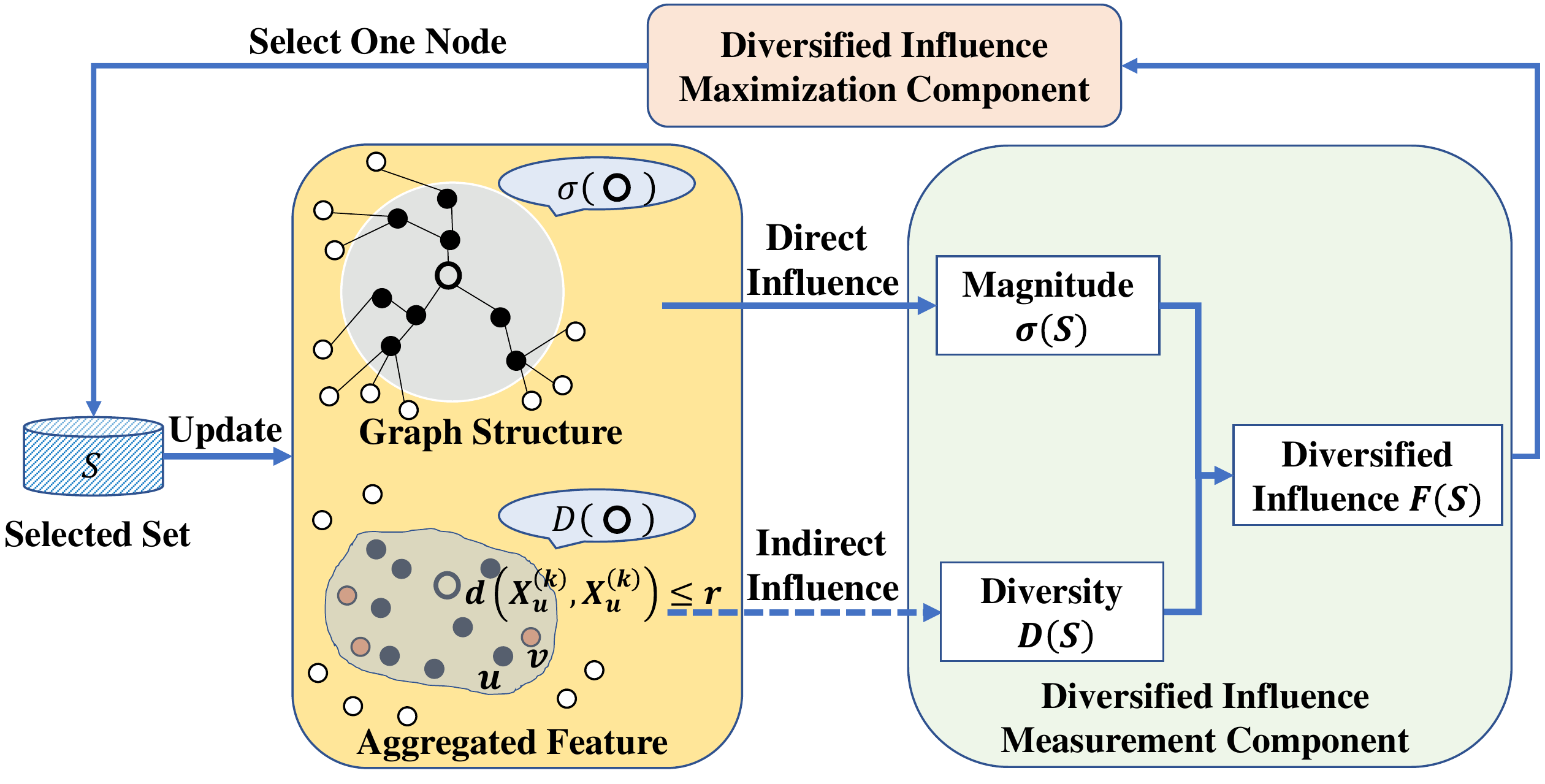}
\caption{The workflow of \sys.}
\label{fig:framework}
\end{figure}

\textit{\underline{Feature Influence Viewpoint.}} Under the given decoupled propagation mechanism, $\mathbf{X}^{(k)}$ is the aggregated feature obtained by propagating features from nodes within
$k$-hop neighborhood. Hence, $\mathbf{X}^{(k)}$ captures the information from the subtree of
height $k$ rooted at individual nodes. 
By taking feature propagation as a type of influence propagation, we open up a new perspective for the data selection problems in GNNs from the
viewpoint of influence maximization (IM). Inspired by~\cite{wang2020unifying,DBLP:conf/icml/XuLTSKJ18}, we measure the feature influence of a node $u$ on $v$ by
by how much a change in the input feature of $u$ affects the \emph{aggregated} feature of $v$ after $k$ iterations.
\begin{definition}[Feature Influence] The feature influence score of
node $u$ on node $v$ after k-step propagation is the L1-norm of
the expected Jacobian matrix:
\begin{small}
\begin{equation}
    I(v,u,k)=\left\lVert \mathbb{E}[\partial \mathbf{X
    }_v^{(k)}/\partial \mathbf{X
    }_u^{(0)}]\right\rVert_1.
\end{equation}
\end{small}
The normalized influence score is defined as:
\begin{small}
\begin{equation}
I_v(u,k)=\frac{I(v,u,k)}{\sum_{w\in \mathcal{V}}I(v,w,k)}.\label{eq:inf}
\end{equation}
\end{small}
\end{definition}

Given the $k$-step feature propagation mechanism~\eqref{equ:prop}, the feature influence score $I_v(u,k)$ captures the sum
over probabilities of all possible influential paths 
from $v$ to $u$. For example, let the generalized transition matrix be $\mathbf{T}=\mathbf{T}_{rw}$, the $I_v(u,k)$ is the probability that a random walk starting at $v$ ends at $u$ after taking $k$ steps: 
\begin{small}
\begin{equation}
    I_v(u,k)=\sum_{\mathcal{P}^{v\rightarrow u}_k}\prod_{i=k}^{1}\widetilde{a}_{v_{(i-1)},v_{(i)}},
    \label{eq:walkdis}
\end{equation}
\end{small}
where $\mathcal{P}^{v\rightarrow u}_k$ 
be a path $[v_{(k)},
v_{(k-1)}
, \ldots, 
v_{(0)}]$ of
length $k$ from node $v$ to $u$ and $\widetilde{a}_{v_{(i-1)},v_{(i)}}$ is the normalized weight of edge $(v_{(i)},v_{(i-1)})$. 


\begin{small}
\begin{table}[t]
    \caption{Propagation mechanisms adopted in various GNNs. Note that each edge weight in $A_T$ means the number of different triangles it belongs to and Triangle. IA is the abbreviation of Triangle-induced Adjacency.}
    \centering
    \resizebox{0.95\linewidth}{!}{
    \begin{tabular}{l|l}
        \toprule
        \textbf{Prorogation Mechanism} & \textbf{Formula}\\
        \midrule
         Normalized Adjacency~\cite{DBLP:conf/iclr/KipfW17} & $\mathbf{X}^{(k)}=\mathbf{T}_{sym}\mathbf{X}^{(k-1)}
         $\\
         Random Walk~\cite{wu2019simplifying} & $\mathbf{X}^{(k)}=\mathbf{T}_{rw}\mathbf{X}^{(k-1)}
         $\\
         PPR~\cite{DBLP:journals/corr/abs-1810-05997} & $\mathbf{X}^{(k)}=(1-\alpha)\mathbf{T}_{rw}\mathbf{X}^{(k-1)}+\alpha \mathbf{X}^{(0)}$ 
         \\
        Triangle. IA~\cite{sign_icml_grl2020} & $\mathbf{X}^{(k)}= \mathbf{T}_{tr}\mathbf{X}^{(k-1)}
        $\\
        S$^2$GC~\cite{zhu2021simple} & $\mathbf{X}^{(k)}=
        \frac{1}{k}((1-\alpha)T^k\mathbf{X}^{(0)} + \alpha \mathbf{X}^{(0)} + (k-1)\mathbf{X}^{(k-1)})
        $\\
        GBP~\cite{DBLP:conf/nips/ChenWDL00W20} & $\mathbf{X}^{(k)}= \theta_k T^k \mathbf{X}^{(0)} +  \mathbf{X}^{(k-1)}
        $\\
        \bottomrule
    \end{tabular}}
    \label{tlb:prop}
\end{table}
\end{small}

\textit{\underline{Discussions.}}GNN model family adopts a variety of propagation mechanisms, such as Random Walk and Personalized PageRank (PPR), as summarized in Table~\ref{tlb:prop}. 
The performance of different mechanisms depends on the task, graph structure, and features jointly. For example, in social networks, triangle-based propagation might help distinguishing edges representing weak or strong ties.
On graphs with noisy connectivity, PPR may work well.
One could also add other propagation kernels following Eq.~\eqref{equ:prop}. \sys can be applied to a large variety of GNNs by i) adopting similar propagation kernels used by these GNNs and ii) computing node influence with Eq.~\eqref{eq:inf}. In this way, we offer a general approach for GNN active learning from the novel perspective of feature influence.


The decoupled feature propagation in Eq.~\eqref{equ:prop} enables \sys to perform node selection in a model-free manner, i.e., it selects the nodes to label once and for all before the GNN model starts. 
It is worth pointing that such decoupling in \sys \emph{does not strictly} require the decoupled feature propagation in downstream GNN models. 
Besides, \sys also supports GNN with interwined feature propagation and DNN transformation (e.g., GCN) and self-supervised GNN model (e.g., MVGRL~\cite{hassani2020contrastive}). 
The main reason for this generality is that: for both coupled and decoupled GNNs, (1) they primarily benefit from performing feature propagation over graph rather than learning non-linear hierarchies of features~\cite{wu2019simplifying}, and (2) their node influence distributions are consistently connected to probabilities of possible influential paths, e.g.,  \cite{DBLP:conf/icml/XuLTSKJ18} proves that the influence distribution in GCN is equivalent to the random walk distribution given by Eq.~\eqref{eq:walkdis} when choosing $\mathbf{T}=\mathbf{T}_{rw}$ in Eq.~\eqref{equ:prop}.

\subsection{Diversified Influence Maximization}
\textit{\underline{Influence Function.}} Intuitively, the weak influence of a label node $u$ on $v$ with small probability $I_v(u,k)$ would have limited impact on $v$ due to few influence paths to propagate labels. 

Therefore, we
define the activation of a node $v$ by requiring the maximum
influence $I_v(S,k) = \max_{u\in S} I_v(u,k)$ of $S$ on the node $v$ is larger than a certain threshold value.

\begin{definition}[Activated Node Set]
Given a threshold $\theta$, $k$-step feature propagation~\eqref{equ:prop}, and a set of seeds $S$ , the activated node set $\sigma(S)$ is a subset of nodes in $\mathcal{V}$ that can be
activated by $S$:
\begin{small}
\begin{equation}
\sigma(S) = \mathop{\bigcup}_{v\in \mathcal{V}, I_v(S,k) > \theta }\{v\},
\label{mag}
\end{equation}
\end{small}
\end{definition}

Note that the threshold $\theta$ controls the value at which a node $v$ is activated (or significantly influenced) by the labeled nodes set $S$. 
An inactive node $v$ becomes active if $I_v(S,k) > \theta$.
Therefore, a larger $\theta$ requires the labeled node set $S$ to impose a stronger influence on the node $v$ in order to activate it. The budget $\mathcal{B}$ here is the number of nodes to label, i.e., $|S|\leq \mathcal{B}$. 
Given a small budget, the overall influence of the labeled node set $S$ on other nodes is relatively weak. In this case, we set a small threshold $\theta \rightarrow 0$ to make nodes more easily to be activated, so that we can choose nodes (to label) that can activate more unlabeled nodes for model training. Otherwise, few nodes can be activated by $S$. On the contrary, given enough labeling budget, the overall influence of the label node set $S$ is relatively larger, and we should set a larger threshold $\theta > 0$ to ensure sufficient influence of $S$ on each activated node $v$.


\begin{figure}[tp]
\centering  
\subfigure[Influence magnitude $|\sigma(S)|$]{
\label{fig:inf}
\includegraphics[width=0.23\textwidth]{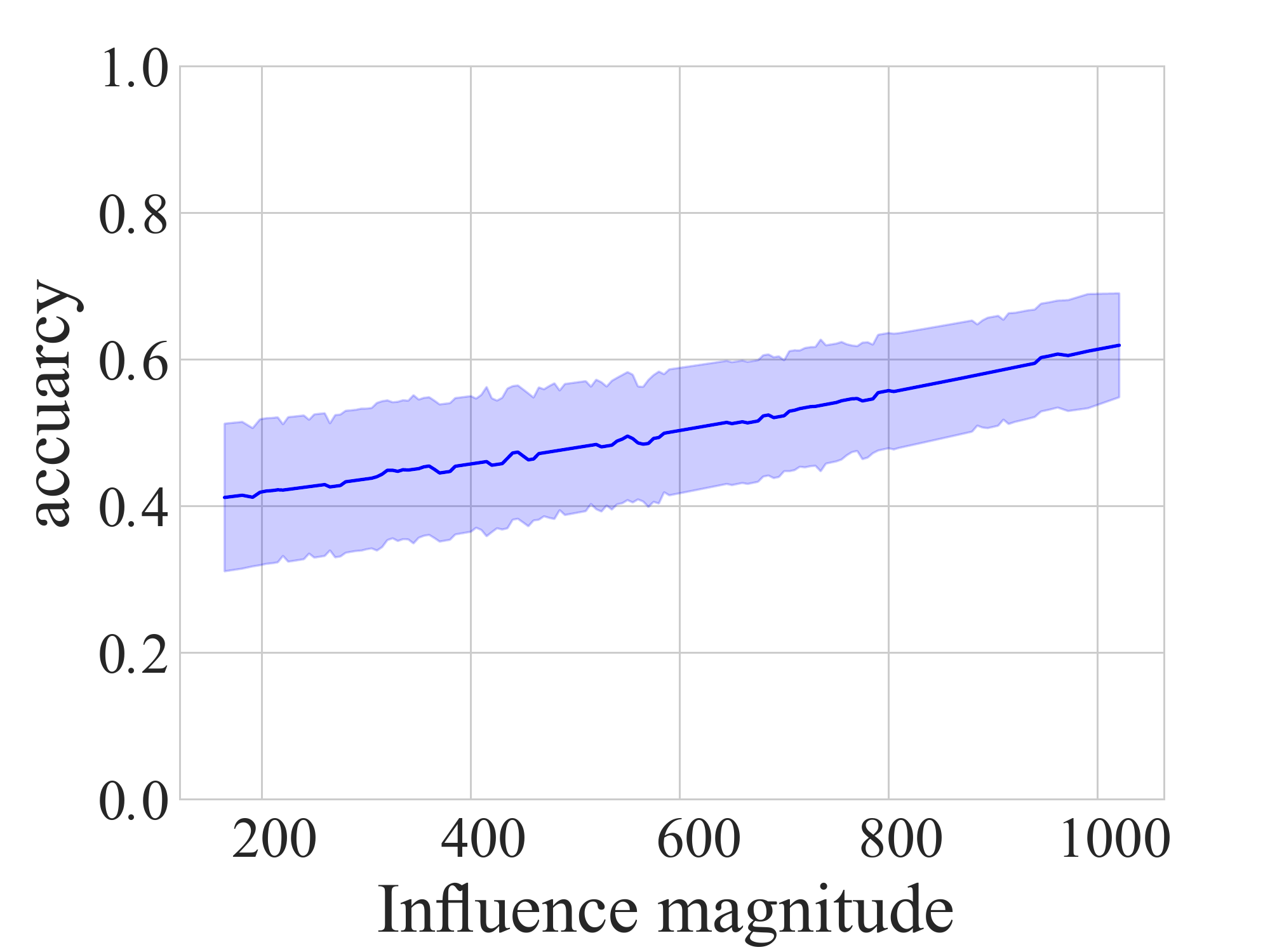}}
\subfigure[Influence diversity $D(S)$]{
\label{fig:div}
\includegraphics[width=0.23\textwidth]{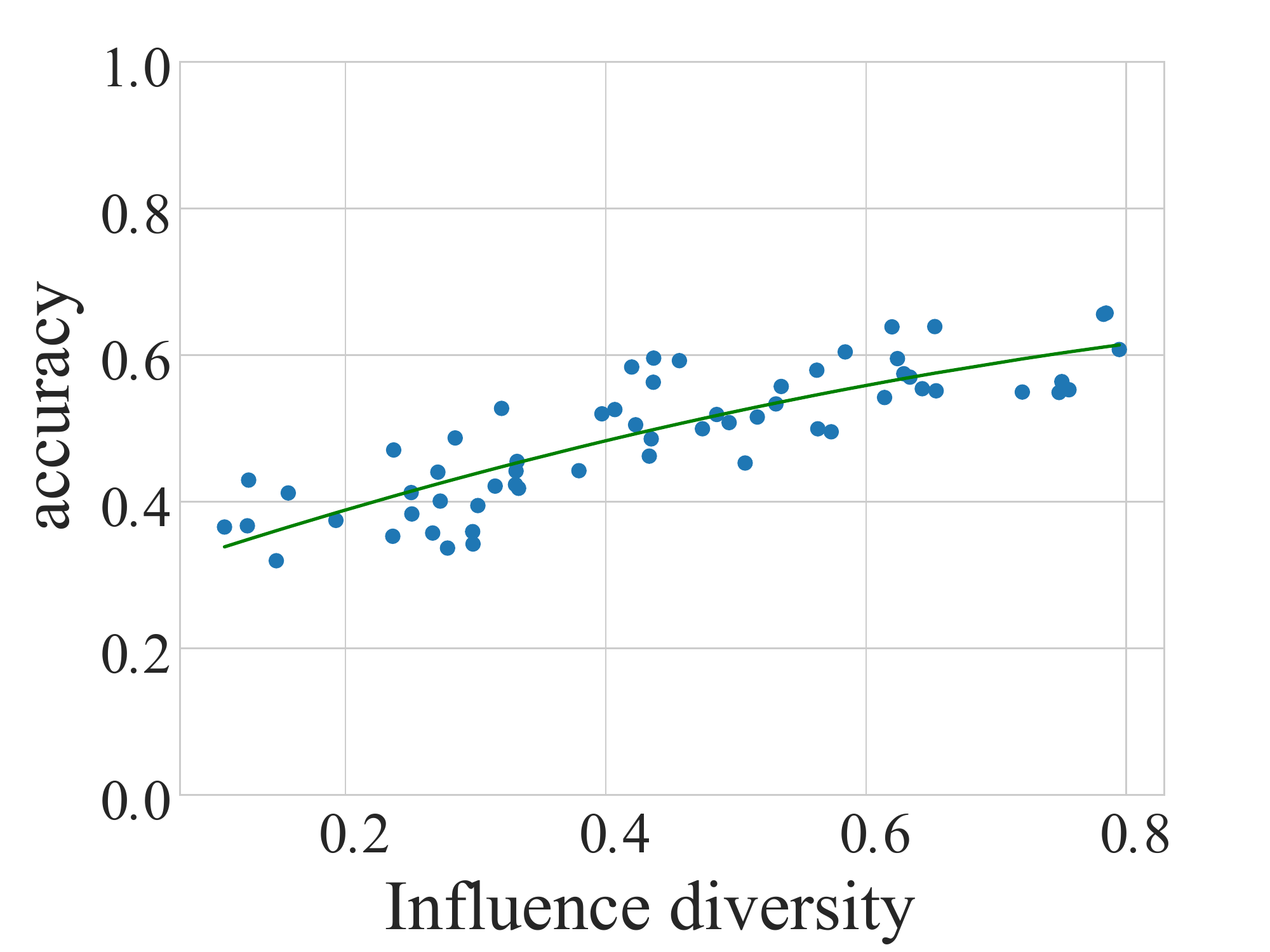}}
\caption{The relationship between seed nodes $S\ (|S|=20)$ and the test accuracy of GCN model trained on $S$ of Cora.}
\label{rp_all}
\end{figure}

Since increasing the influence of nodes that are already activated cannot benefit $\sigma(S)$, it is easy to derive the following theorem:
\begin{theorem}
$|\sigma(S)|$ is nondecreasing and submodular with respect to $S$, i.e., $\forall S\subseteq T, v\notin T, |\sigma(T)| \geq |\sigma(S)|$ and $|\sigma(S\cup\{v\})|-|\sigma(S)|\geq |\sigma(T\cup\{v\})|-|\sigma(T)|$.
\end{theorem}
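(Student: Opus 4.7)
My plan is to prove monotonicity and submodularity directly from the definitions of $I_v(S,k) = \max_{u \in S} I_v(u,k)$ and $\sigma(S) = \{v \in \mathcal{V} : I_v(S,k) > \theta\}$, without invoking any property of feature propagation beyond those encoded in $I_v(u,k)$. The key observation is that the activation rule for $v$ is governed by a pointwise maximum, so set-valued monotonicity of $\sigma$ will follow from monotonicity of the max, and submodularity will reduce to a ``containment of marginal gains'' argument.

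For monotonicity, I would take $S \subseteq T$ and argue that for every node $v$, $I_v(S,k) = \max_{u \in S} I_v(u,k) \leq \max_{u \in T} I_v(u,k) = I_v(T,k)$. Hence $I_v(S,k) > \theta$ implies $I_v(T,k) > \theta$, so $\sigma(S) \subseteq \sigma(T)$ and $|\sigma(S)| \leq |\sigma(T)|$. This step is essentially immediate.

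For submodularity, I would characterize the marginal gain $|\sigma(S \cup \{v\})| - |\sigma(S)|$ as the cardinality of $\Delta(S,v) := \sigma(S \cup \{v\}) \setminus \sigma(S)$, i.e., the set of nodes $w$ with $I_w(S,k) \leq \theta$ but $I_w(v,k) > \theta$ (the latter because otherwise $I_w(S \cup \{v\}, k) = \max(I_w(S,k), I_w(v,k)) \leq \theta$). The plan is then to show $\Delta(T,v) \subseteq \Delta(S,v)$ whenever $S \subseteq T$: if $w \in \Delta(T,v)$, then $I_w(T,k) \leq \theta$ forces $I_w(S,k) \leq \theta$ by monotonicity of the max, while the condition $I_w(v,k) > \theta$ depends only on $v$ and so is inherited unchanged. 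Taking cardinalities gives $|\sigma(S\cup\{v\})| - |\sigma(S)| \geq |\sigma(T\cup\{v\})| - |\sigma(T)|$.

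The only point that needs a bit of care is the characterization of ``newly activated nodes'' via $I_w(v,k) > \theta$, since the strict inequality in the threshold and the use of max must be reconciled; I would handle this by the simple case split $I_w(S\cup\{v\},k) = \max(I_w(S,k), I_w(v,k))$, which is the main (and only mild) obstacle. Beyond this, no approximation arguments or probabilistic reasoning about random walks are needed — the result is purely combinatorial and rests entirely on the max-based definition of activation.
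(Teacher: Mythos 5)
Your proposal is correct, and it takes the same route the paper implicitly relies on: the paper offers only the one-line remark that ``increasing the influence of nodes that are already activated cannot benefit $\sigma(S)$'' and provides no formal proof, while your argument (monotonicity of the pointwise max, plus showing the newly-activated set $\Delta(T,v)\subseteq\Delta(S,v)$) is precisely the rigorous formalization of that intuition. No gaps; the max-based case split you flag is handled correctly.
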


\textit{\underline{Influence Maximization.}}
To increase the feature/label influence (smoothness) effect on graphs, we should select nodes that can influence more unlabeled nodes. 
Due to the impact of graph structure, the speed of expansion, i.e., the growth of the influence, can change dramatically given different sets of label nodes. 
This observation motivates us to address the graph data selection problem in the viewpoint of influence maximization defined in Eq.~\eqref{max_thetaS}.
To illustrate this insight, we randomly select different sets of $\mathcal{B} = 20$ labeled nodes and train a GCN model with the different labeled set on Cora. We sort these label sets in terms of $\sigma(S)$, and Figure~\ref{fig:inf} plots the relationships between influence magnitude $\sigma(S)$ and the GCN model accuracy trained with the label supervision of $S$. Even with the same number of 20 labeled nodes, Figure~\ref{fig:inf} shows that the accuracy tends to increase along with their influence magnitude, implying the potential gain of exploring the influence viewpoint.

\textit{\underline{Introducing Diversity.}}
From Figure~\ref{fig:inf}, we also observe that the node set $\sigma(S)$ with a similar influence still leads to different accuracy (i.e., high variance).  
This is because simply maximizing $\sigma(S)$ fails to model the interactions between nodes, i.e., activating one node could often affect the utility of activating another one. 
Since the aggregated features $\mathbf{X}^{(k)}$
contains both the feature and graph structure information,
it is reasonable to assume that nodes that are close together in the aggregated feature space will be likely to have the same label.
So when a node $v\in \sigma(S)$ is activated over the graph structure, the propagated influence/label signal can also be viewed as \emph{indirectly} interpolating
a smooth signal to nodes with close feature distance from $v$ at the feature space, which we refer to as \emph{indirect influence}.
So besides the direct influence by feature propagation over the graph structure, we also consider enforcing this indirect influence over feature space to further improve the overall effectiveness of influence. 
To this end, the diversity of activated nodes
$\sigma(S)$ needs to be explored in the feature space. Specifically, we expect more nodes in $\sigma(S)$ can scatter over different regions of $\mathcal{V}$ in the feature space, aim to allow each node in $\mathcal{V}$ can find a close node belonging to $\sigma(S)$. We shall introduce our diversity functions in the later Section~\ref{sec:diveristy}.

\textit{\underline{Proposed Criterion.}} 
In this paper, we propose a new GNN data selection criterion to consider the magnitude of influence and the diversity of the influence simultaneously, from a novel viewpoint of diversified influence maximization.
Specifically, our \sys framework adopts a diversified influence
maximization objective:
\begin{small}
\begin{equation}
\max_S\ F(S)= \frac{{|\sigma(S)|}}{\hat{{|\sigma|}}} + \gamma \frac{{D(S)}}{\hat{D}},\  \mathbf{s.t.}\ {S}\subseteq \mathcal{V},\ |{S}|=\mathcal{B}.
\label{eq:obj}
\end{equation}
\end{small}
where $\sigma(S)$ is the influence function, $D(S)$ represents the diversity of the influenced crowd; and $\gamma \in [0,1]$ is a trade-off parameter. $\hat{\mathcal{\sigma}}$ and  $\hat{{D}}$ are normalization factors, commonly
chosen as the maximum possible values of ${|\sigma(S)|}$ and ${D(S)}$,
respectively.

To demonstrate our diversified influence principle, we measure the diversity of two nodes $u,v\in \sigma(S)$ with their Euclidean distance over $k$-hop aggregated feature space: $D(u,v)=\frac{1}{2}\left\lVert
\frac{\mathbf{X}_u^{(k)}}{\lVert \mathbf{X}_u^{(k)} \rVert} - \frac{\mathbf{X}_v^{(k)}}{\lVert \mathbf{X}_v^{(k)} \rVert}\right\rVert $. Larger distance indicates
higher dissimilarity of two representations. We compute the diversity of nodes in $\sigma(S)$ by their average of pairwise-distance.
Figure~\ref{fig:div} plots the relationship between influence diversity and the model accuracy when the influence magnitude $|\sigma(S)|$ is fixed to 400, demonstrating that the diversity significantly benefits the model performance.

\subsection{Submodular Diversity Functions}\label{sec:diveristy}
However, pair-wise diversity is not monotone and submodular. To guarantee diversity and efficient greedy
maximization, we next discuss a general recipe for constructing monotone and submodular diversity functions $D(S)$. Our scheme relies on enforcing the indirect influence by enabling each node in $\mathcal{V}$ can find a close node belonging to $\sigma(S)$. The diversity is enforced by avoiding repeatedly adding similar nodes to $\sigma{(S)}$ (i.e., those have already been affected by the indirect influence.)
Based on this scheme, we define two submodular diversity functions, i.e., the Nearest Neighbor(NN) based diversity and the coverage based diversity.

\begin{figure}[t]
\centering
\includegraphics[width=2.2in]{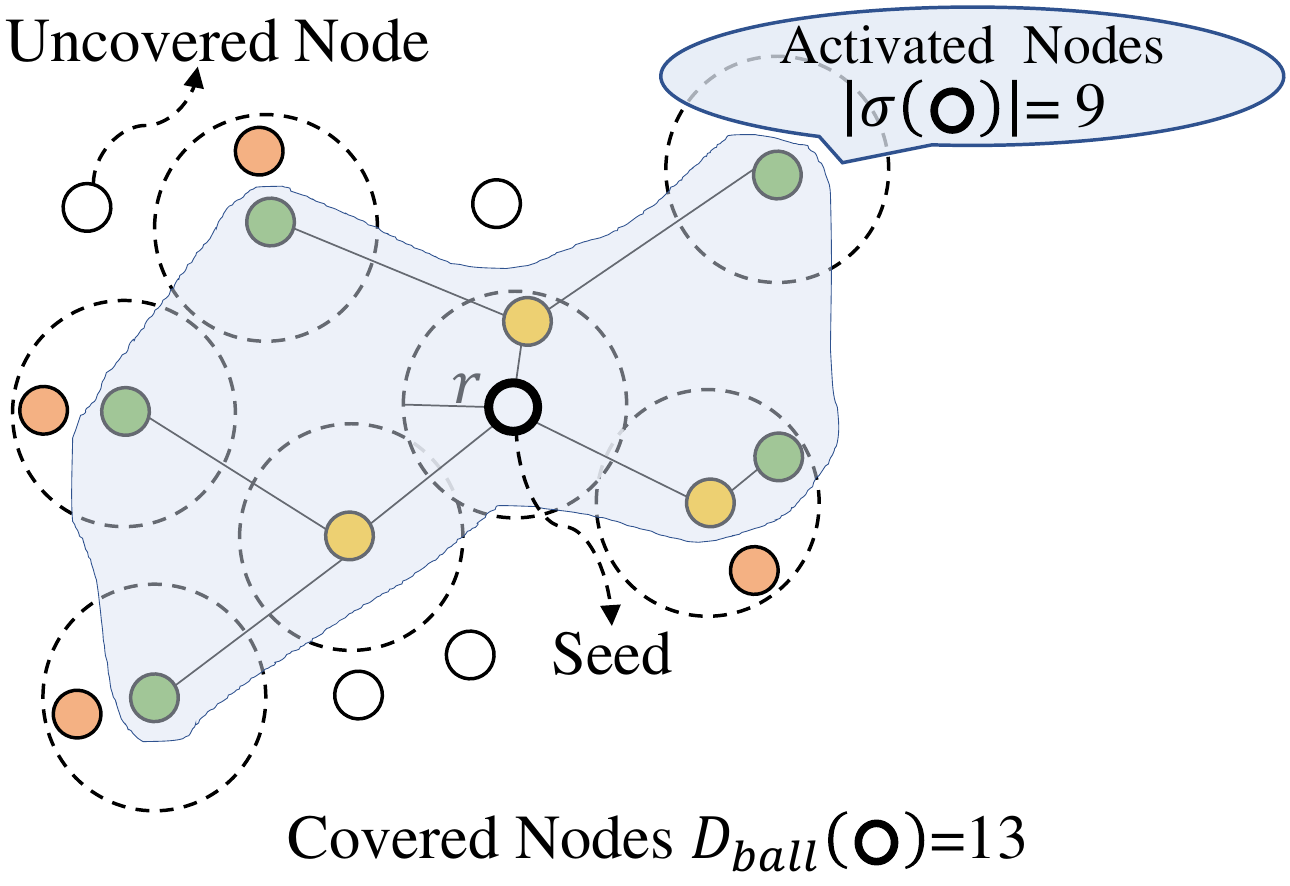}
\caption{The coverage-based diversity of a seed node.}
\label{fig:cover}
\end{figure}

\textit{\underline{Nearest Neighbor (NN)-based Diversity.}} 
We measure the indirect influence for each node $v\in \mathcal{V}$ by $v$'s distance to the activated node $u\in \sigma(S)$ closest to it in the feature space.
Let $d(\cdot)$ measures the euclidean distance of two aggregated feature vectors, and $d_{max} =  \max_{u, v\in \mathcal{V}}\ d(\mathbf{X}^{(k)}_u, \mathbf{X}^{(k)}_v)$ is the maximum
pairwise distance. We define the diversity function as:
\begin{definition}[NN-diversity Function] Given the $k$-step feature propagation mechanism~\eqref{equ:prop}, the diversity function of seed set $S$ is:

\begin{footnotesize}
\begin{equation}
D_{NN}(S) = \sum_{u\in \mathcal{V}}\left({d_{max} - \min_{v\in \sigma(S)}\ d\left(\mathbf{X}^{(k)}_u, \mathbf{X}^{(k)}_v\right)}\right).
\label{eq:ds}
\end{equation}
\end{footnotesize}
\end{definition}

\begin{theorem}
\label{theorem}
The function $D_{NN}(S)$ is nondecreasing and submodular with respect to $S$.
\end{theorem}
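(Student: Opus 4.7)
The plan is to prove the two claims separately, beginning with monotonicity, which is the easier half, and then tackling submodularity by reducing it to a pointwise property of each summand.

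First I would establish a structural lemma about $\sigma(\cdot)$ itself. Since $I_v(S,k)=\max_{u\in S} I_v(u,k)$, we have $v\in\sigma(S)$ iff there exists some $u\in S$ with $I_v(u,k)>\theta$; in other words, letting $A_v=\{u\in\mathcal{V}:I_v(u,k)>\theta\}$, $v\in\sigma(S)\Longleftrightarrow S\cap A_v\neq\emptyset$. Two immediate consequences follow: (i) $S\subseteq T\Rightarrow \sigma(S)\subseteq\sigma(T)$; and (ii) for any $S\subseteq T$ and $w\notin T$, the "new arrivals" satisfy $\sigma(T\cup\{w\})\setminus\sigma(T)\subseteq \sigma(S\cup\{w\})\setminus\sigma(S)$, because $v$ being a new arrival under $T$ means $T\cap A_v=\emptyset$ and $w\in A_v$, and the first condition already implies $S\cap A_v=\emptyset$.

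With (i) in hand, monotonicity of $D_{NN}$ is quick: for each fixed $u$, enlarging $\sigma(S)$ can only decrease $\min_{v\in\sigma(S)} d(\mathbf{X}_u^{(k)},\mathbf{X}_v^{(k)})$, hence can only increase the summand $d_{\max}-\min_{v\in\sigma(S)} d(\cdot)$. Summing over $u\in\mathcal{V}$ gives $D_{NN}(S)\le D_{NN}(T)$ whenever $S\subseteq T$.

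For submodularity I would show that each summand $h_u(S):=d_{\max}-\min_{v\in\sigma(S)}d_u(v)$, where $d_u(v)=d(\mathbf{X}_u^{(k)},\mathbf{X}_v^{(k)})$, is itself submodular in $S$; then $D_{NN}=\sum_u h_u$ inherits submodularity from its summands. Writing $m_S=\min_{v\in\sigma(S)}d_u(v)$ and $\delta_{S,w}=\min_{v\in\sigma(S\cup\{w\})\setminus\sigma(S)}d_u(v)$ (with the convention $\min\emptyset=+\infty$), I would observe that the marginal gain collapses neatly to
\begin{equation}
h_u(S\cup\{w\})-h_u(S)=\max\!\bigl(0,\ m_S-\delta_{S,w}\bigr),
\end{equation}
since either some new activated node beats the current nearest, or it does not. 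Now property (i) gives $m_S\ge m_T$ and property (ii) gives $\delta_{S,w}\le \delta_{T,w}$, so $m_S-\delta_{S,w}\ge m_T-\delta_{T,w}$; applying the non-decreasing map $x\mapsto\max(0,x)$ yields the required submodular inequality for each $u$. Summing over $u$ completes the proof.

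The main obstacle is the refined monotonicity property (ii): the bare submodularity of $|\sigma(S)|$ from Theorem 1 is not strong enough here, because we need a set-wise, not just cardinality-wise, containment of the marginal increments. That is why the explicit characterization via the sets $A_v$ is essential—it turns both (i) and (ii) into transparent consequences of the $\max$-structure of $I_v(S,k)$, after which the rest of the argument is a routine marginal-gain computation.
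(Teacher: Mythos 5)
Your proof is correct, and it shares the paper's overall skeleton---reduce $D_{NN}$ to the per-node summands $h_u(S)=d_{\max}-\min_{v\in\sigma(S)}d\left(\mathbf{X}^{(k)}_u,\mathbf{X}^{(k)}_v\right)$, prove each is nondecreasing and submodular, and sum over $u$---but the way you establish per-summand submodularity is genuinely different. The paper argues by a three-way case analysis on where the minimizer of the distance over $\sigma(T\cup\{x\})$ lands (in $\sigma(S)$, in $\sigma(\{x\})$, or in $\sigma(T)\setminus\sigma(S)$), using the decomposition $\sigma(T\cup\{x\})=\sigma(S)\cup\sigma(\{x\})\cup\left(\sigma(T)\setminus\sigma(S)\right)$. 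You instead make explicit the coverage structure of $\sigma$ via the sets $A_v=\{u: I_v(u,k)>\theta\}$ (which the paper exploits only implicitly), extract the closed-form marginal gain $\max(0,\,m_S-\delta_{S,w})$, and conclude from the two monotonicity facts $m_S\geq m_T$ and $\delta_{S,w}\leq\delta_{T,w}$, the latter following from your containment $\sigma(T\cup\{w\})\setminus\sigma(T)\subseteq\sigma(S\cup\{w\})\setminus\sigma(S)$ of the marginal activation sets. Your route avoids the case split entirely and isolates exactly which property of $\sigma$ is needed, namely that it is a union of per-node coverage sets so its marginal activations shrink as $S$ grows; the paper's case analysis is more pedestrian but needs nothing beyond monotonicity of $\sigma$ and the union decomposition. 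Both arguments are valid, and yours generalizes more readily to any monotone coverage-type activation function.
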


\textit{\underline{Coverage-based Diversity.}}
Notice that $D_{NN}(S)$ is to minimize the total distance, it might incur relatively high performance variation without considering the variance of distance (and thus the indirect influence). So we further propose a coverage-based strategy emphasizing variance-reduction for more stable performance. Specifically, we assume indirect influence is only valid within each $r$-radius ball $G_u$ centered at each activated node $u\in \sigma(S)$: i.e., $ G_u = \left\{v\  \middle |\forall v \in \mathcal{V}, d(\mathbf{X}^{(k)}_u, \mathbf{X}^{(k)}_v)\leq r \right\}$. As illustrated in Figure~\ref{fig:cover}, our diversity function is the aggregated indirect influence of $\sigma(S)$:

\begin{definition}[Ball-diversity Function] 
Given a set of groups $\{G_u\}$ and $\sigma{(S)}$, the diversity function of seed set $S$ is: 
\begin{footnotesize}
\begin{equation}
D_{ball}(S) = \left|\bigcup_{u\in \sigma(S)} G_u \right|.
\label{C_t}
\end{equation}
\end{footnotesize}
\end{definition}

The influence function $|\sigma{(S)}$| can be treated as the special case of $D_{ball}(S)$ with ball radius $r = 0$, i.e., ignoring the indirect influence. 
Similar to $|\sigma{(S)}$|, $D_{ball}(S)$ is apparently increasing with larger $|S|$ since more nodes can be covered. Since more nodes have not been covered with larger $|S|$, it is harder to cover more nodes, and thus the marginal gain decreases accordingly. So we have:

\begin{theorem}
The function $D_{ball}(S)$ is nondecreasing and submodular with respect to $S$.
\end{theorem}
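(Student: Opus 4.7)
My plan is to rewrite $D_{ball}$ as a sum of $0/1$-valued coverage indicators and to inherit monotonicity and submodularity from them. For each $w \in \mathcal{V}$, let $H(w) = \{u \in \mathcal{V} : w \in G_u\}$ denote the set of centers whose ball covers $w$, and define $\phi_w(S) = \mathbf{1}[\sigma(S) \cap H(w) \neq \emptyset]$. A node $w$ is counted in $D_{ball}(S)$ exactly when some activated seed's ball contains it, so $D_{ball}(S) = \sum_{w \in \mathcal{V}} \phi_w(S)$. Since a nonnegative sum of monotone submodular functions is again monotone submodular, it suffices to establish both properties for each $\phi_w$.

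The key ingredient I would verify first is a \emph{set-level} strengthening of the earlier theorem on $|\sigma(S)|$: (i) $S \subseteq T$ implies $\sigma(S) \subseteq \sigma(T)$, and (ii) for $S \subseteq T$ and $v \notin T$, one has $\sigma(S \cup \{v\}) \setminus \sigma(S) \supseteq \sigma(T \cup \{v\}) \setminus \sigma(T)$. Both reduce to direct inspection of the activation rule $\max_{u \in S} I_w(u,k) > \theta$: a node $w$ newly activated when $v$ is added to $T$ must satisfy $I_w(v,k) > \theta$ while no seed in $T$, and therefore no seed in $S \subseteq T$, already activates it, so adding $v$ to $S$ also newly activates $w$. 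I expect this to be the main obstacle, since it is the only step not handled by standard coverage-function manipulations and requires unpacking the threshold rule rather than reusing the submodularity of $|\sigma(S)|$ as a cardinality.

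Given (i), $\phi_w$ is nondecreasing because enlarging $S$ can only enlarge $\sigma(S) \cap H(w)$. For submodularity, fix $S \subseteq T$ and $v \notin T$; if $\phi_w(T \cup \{v\}) - \phi_w(T) = 0$ the required inequality is automatic, so assume this marginal gain equals $1$. Then there exists $u \in \bigl(\sigma(T \cup \{v\}) \setminus \sigma(T)\bigr) \cap H(w)$, and by (ii) the same $u$ lies in $\sigma(S \cup \{v\}) \setminus \sigma(S)$; combined with $\phi_w(S) \leq \phi_w(T) = 0$ this forces $\phi_w(S \cup \{v\}) = 1$, yielding $\phi_w(S \cup \{v\}) - \phi_w(S) \geq \phi_w(T \cup \{v\}) - \phi_w(T)$. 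Summing over $w$ transfers both properties to $D_{ball}$, finishing the proof; as a sanity check, the special case $r = 0$ collapses $G_u$ to $\{u\}$ and recovers the earlier theorem for $|\sigma(S)|$.
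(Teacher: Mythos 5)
Your proof is correct. Note, however, that the paper does not actually supply a formal proof of this theorem: the appendix only proves the analogous statement for $D_{NN}(S)$, and for $D_{ball}(S)$ the main text offers just a one-sentence intuition (``more nodes can be covered \ldots\ the marginal gain decreases accordingly''). Your argument therefore fills a real gap. The decomposition $D_{ball}(S) = \sum_{w\in\mathcal{V}} \phi_w(S)$ with $\phi_w(S) = \mathbf{1}[\sigma(S)\cap H(w)\neq\emptyset]$ is the natural coverage-function route, and you correctly identify that the whole burden falls on the set-level properties of the activation map: monotonicity $\sigma(S)\subseteq\sigma(T)$ and, crucially, the containment $\sigma(S\cup\{v\})\setminus\sigma(S) \supseteq \sigma(T\cup\{v\})\setminus\sigma(T)$, which does not follow from submodularity of the cardinality $|\sigma(S)|$ alone but does follow directly from unpacking the threshold rule $\max_{u\in S} I_w(u,k)>\theta$, exactly as you do. The case analysis on the $0/1$-valued marginal gain of $\phi_w$ is airtight, and the $r=0$ sanity check correctly recovers the earlier theorem on $|\sigma(S)|$. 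If anything, your lemma (ii) is the reusable ingredient the paper implicitly relies on for both this theorem and the one on $|\sigma(S)|$, and stating it explicitly makes the dependence on the specific form of the activation rule visible.
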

Note that the diversity function enforcing the \emph{indirect influence} of GNNs is completely new. The fundamental difference between our diversity functions and the classic approach is that the node coverage is determined by both the propagation over the graph and the distance over the feature space. For example, in our ball-diversity given by Eq.~\eqref{C_t}, the coverage of labeled node set $S$ in the feature space is the ball-covered regions centered on node set $\sigma(S)$ influenced by $S$ over the graph. 
By contrast, the classic coverage approach~\cite{DBLP:conf/nips/PrasadJB14} only considers the covered region centered on $S$, which is designed for independent and identically distributed (i.i.d) data and fails to model the influence imposed by the graph structure.

\begin{figure*}[tp]
\centering  
\subfigure[Cora]{
\label{Fig.single}
\includegraphics[width=0.28\textwidth]{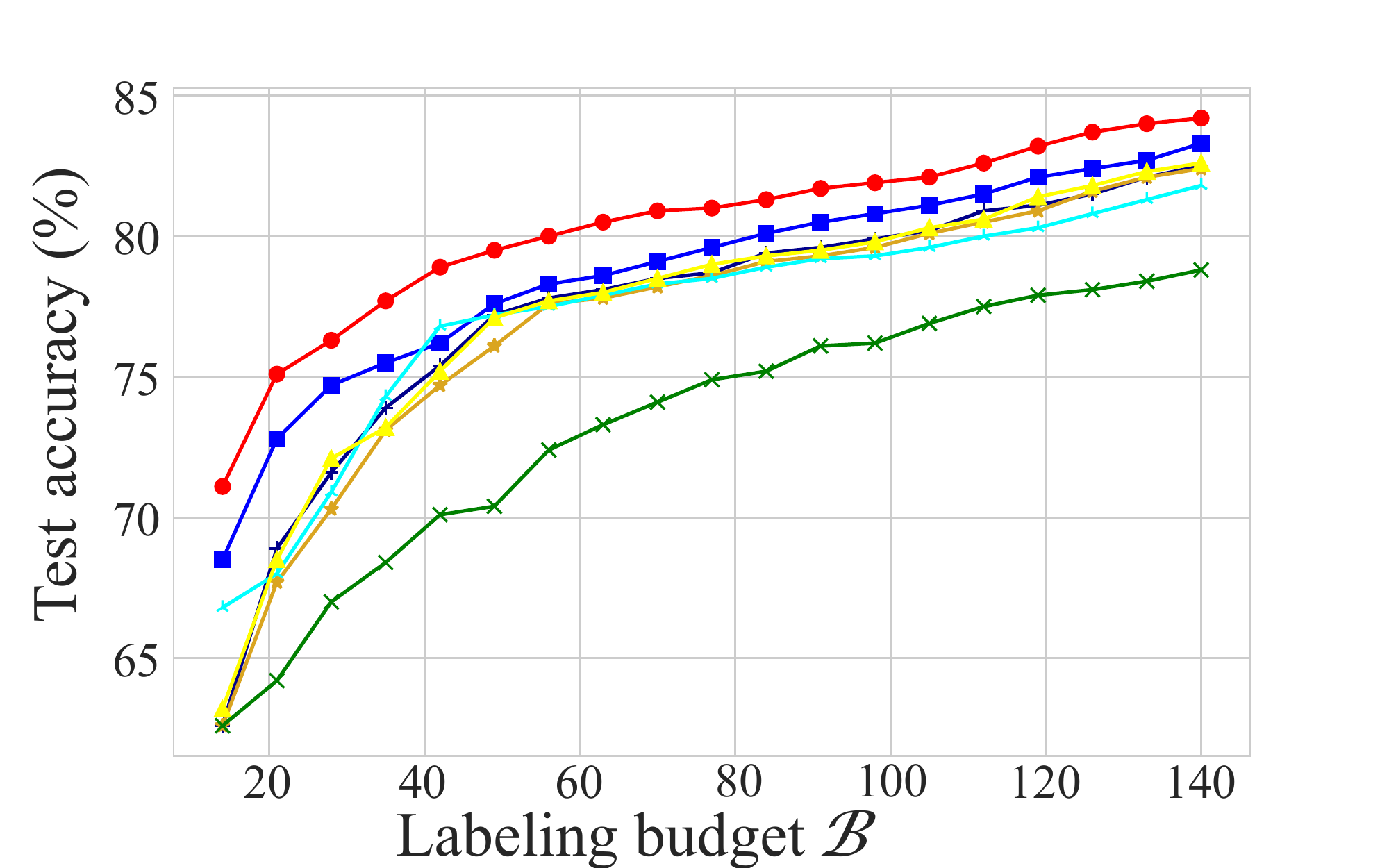}}
\subfigure[Citeseer]{
\label{Fig.ensemble}
\includegraphics[width=0.28\textwidth]{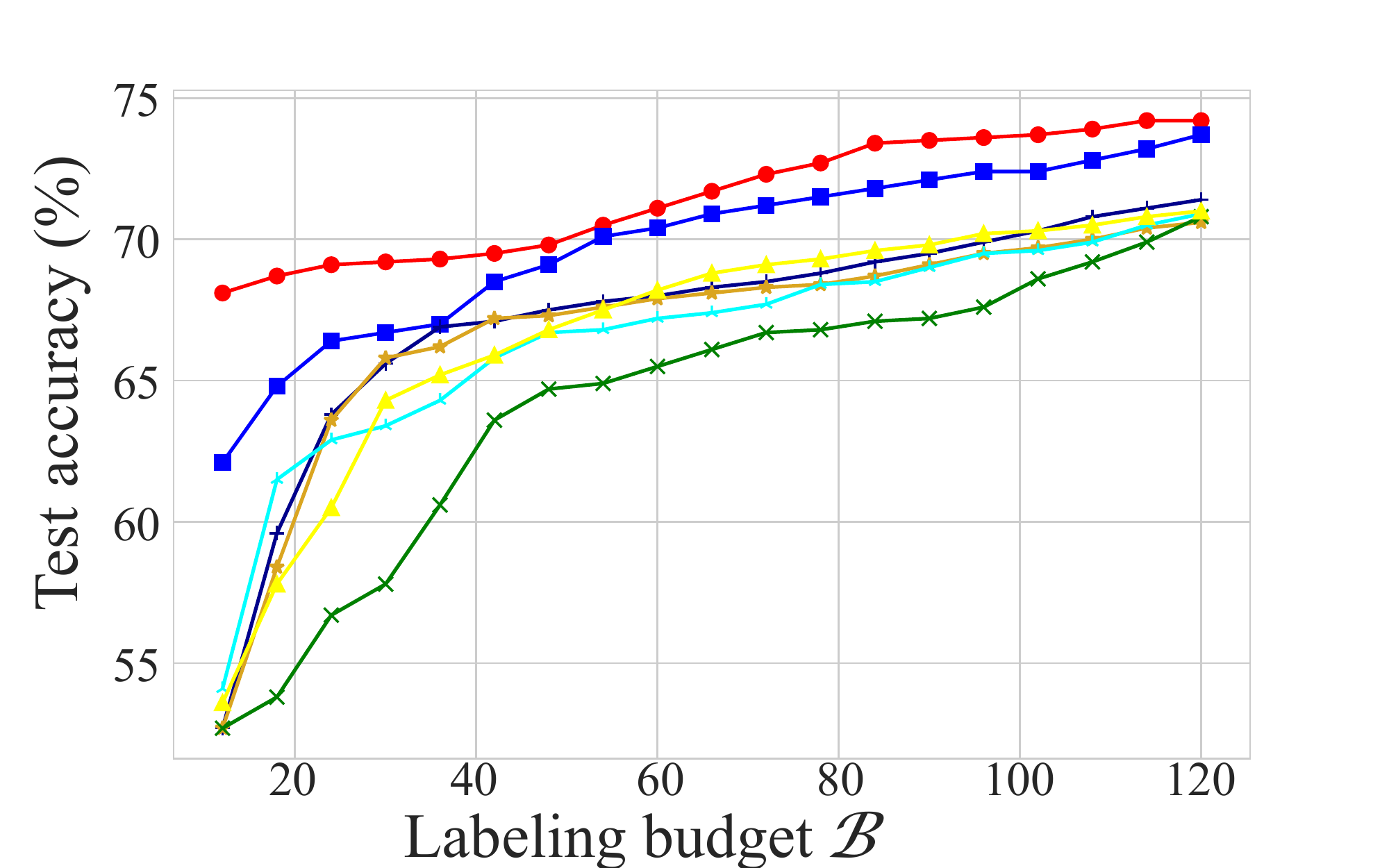}}
\subfigure[PubMed]{
\label{Fig.ensemble}
\includegraphics[width=0.28\textwidth]{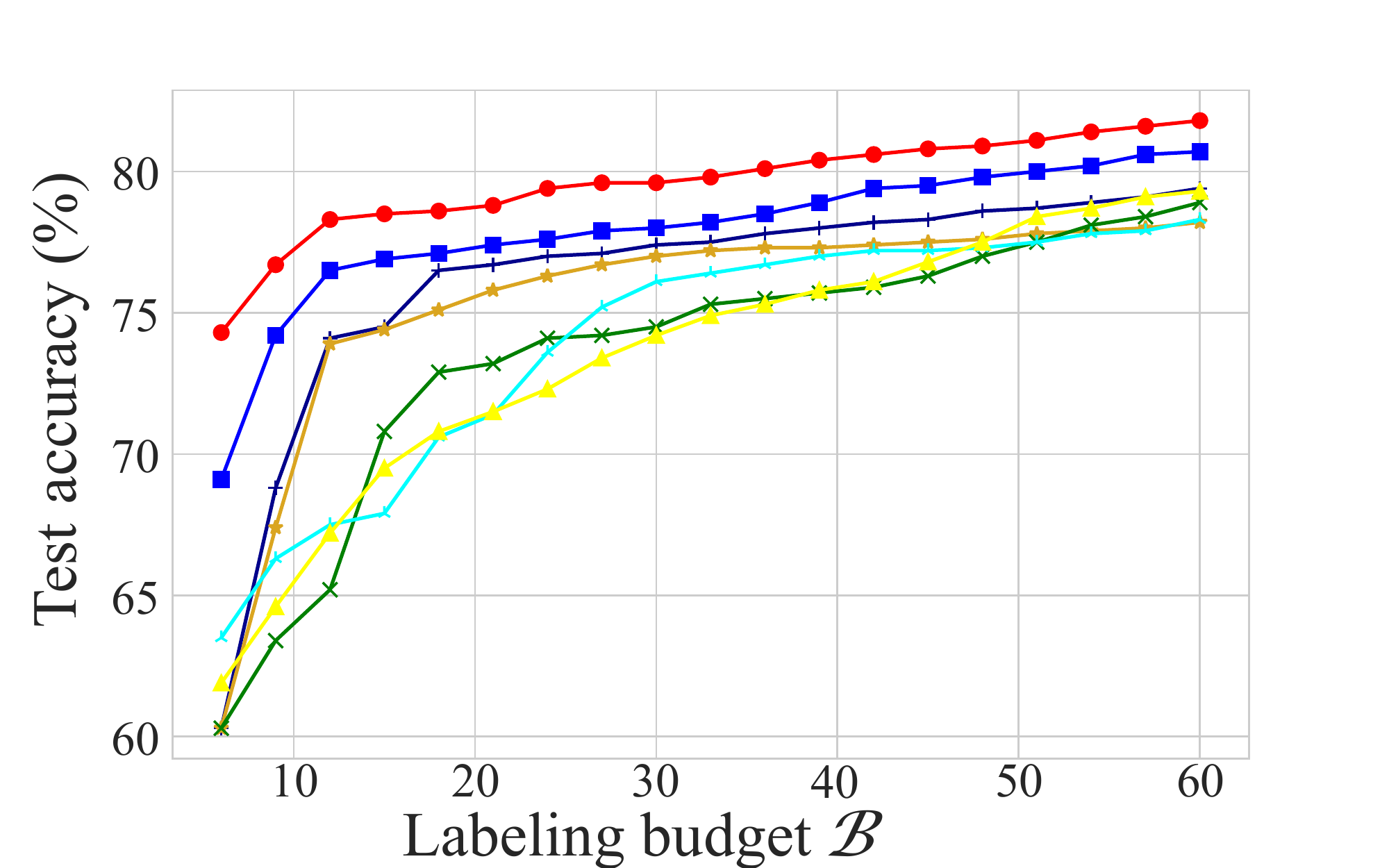}}
\subfigure{
\label{Fig.ensemble}
\includegraphics[width=0.08\textwidth]{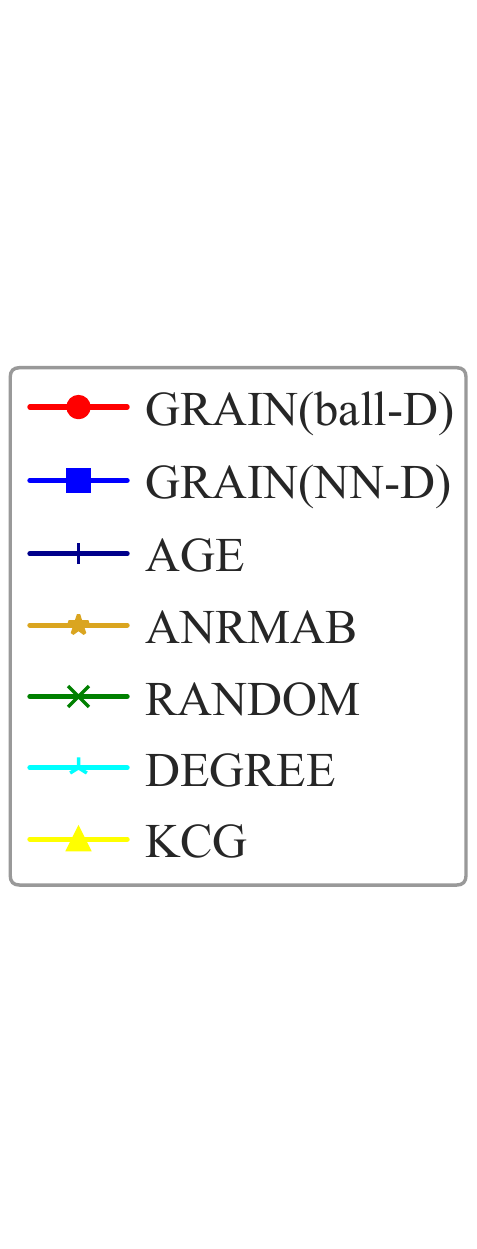}}
\caption{The test accuracy across different labeling budgets for model training.}
\label{fig.al_performance}
\end{figure*}

\begin{algorithm}[t]
  \caption{Greedy node selection}\label{alg:DSGD}
    \KwIn{Graph $\mathcal{G}$, feature $\mathbf{X}$, propagation mechanism $f$ and layer number $k$ for target GNN, budget $\mathcal{B}$.}
\KwOut{Seed set $S$}
    \For{$i=1,2,\ldots, k$}{
        $\mathbf{X}^{(k)} \gets f(\mathbf{X}^{(k-1)}, {\mathbf{T}}, \mathbf{X}^{(0)})$\;
    }
     $S=\emptyset$ \; 
    
    \For{$t=1,2,\ldots, \mathcal{B}$}{
        \For{$v\in \mathcal{V}_{train} \setminus S$}{
        update $\sigma(S \cup \{v\})$ and $D(S \cup \{v\})$ based on \eqref{mag} and \eqref{eq:ds} (or \eqref{C_t}), respectively\;
        }
        $v^*=\mathop{\arg\max}_{v\in \mathcal{V}_{train} \setminus S} \ \ {F(S \cup \{v\})} - F(S)$\;
             $S=S\cup\{{v}^*\}$\; 
        } 
     \textbf{return} $S$
\end{algorithm}

\subsection{Selection Algorithm}

\textit{\underline{Greedy Algorithm.}} Algorithm 1 provides a sketch of our greedy node selection method for GNNs. 
Without losing generality, we consider a batch setting with $\mathcal{B}$ rounds where one node is selected in each iteration.
Given the propagation mechanism $f$ and layer $k$ inherited from a target GNN, we first perform the propagation based on equation~\eqref{equ:prop} (line 2).
Notice the marginal gain $F(S \cup \{v\})-F(S)$ of each node $v\in \mathcal{V}_{train} \setminus S$ is closely correlated to the current label set $S$, which is decreasing as the $S$ grows. Once a vertex is selected and added to the label set $S$, we update the marginal gain of each node $v\in \mathcal{V}_{train} \setminus S$ based on the new set $S$. Specifically, the influence and diversity score of $\sigma(S \cup \{v\})$ and $D(S \cup \{v\})$ would be updated according to Equations~\eqref{mag} and \eqref{eq:ds} (or \eqref{C_t}), respectively (lines 5-6).
Next, we select the node ${v}^*$ generating the maximum marginal gain, and the selected nodes set $S$ are updated (line 7-8). 
For monotone and submodular $F$, the final selected node set $S$ is within a factor of $(1-\frac{1}{e})$ of the optimal set $S^*$: $F(S) \geq (1-\frac{1}{e})F(S^{*})$.

\noindent\textit{\underline{Efficiency Optimization.}}
Compared to existing learning-based methods, \sys provides high efficiency and scalability advantages, as it avoids the training cost by separating the propagation mechanism from the neural networks. 
By leveraging the existing works on scalable and parallelizable social influence maximization, we could enable \sys to
effectively deal with large-scale graphs. The key idea is to identify and dismiss uninfluential nodes in order to dramatically reduce the amount of computation for evaluating influence spread. For example, we can use the degree of nodes or the distribution of random walkers throughout the nodes~\cite{kim2017scalable} to filter out a vast number of uninfluential nodes.


\section{Experiments}

\subsection{Experimental Settings}
\para{Datasets.} We evaluate \sys in both inductive and transductive settings~\cite{hamilton2017inductive} on three citation networks (i.e., Citeseer, Cora, and PubMed)~\cite{DBLP:conf/iclr/KipfW17}, one large social network (Reddit), and the largest benchmark dataset ogbn-papers100M~\cite{hu2021ogb}. 
More descriptions about the datasets are provided in Appendix A.2.


\para{GNN Models.}We conduct experiments using the widely used GCN model and
demonstrate the generalization of \sys on other GNNs such as SGC~\cite{wu2019simplifying}, APPNP~\cite{klicpera2018predict} and MVGRL~\cite{hassani2020contrastive} in Section~\ref{gen}.
The description of these GNNs is provided in Appendix A.3. 

\para{Baselines.} We compare \sys(ball-D) and \sys(NN-D) with the following baselines: Random, Degree, AGE~\cite{cai2017active}, ANRMAB~\cite{gao2018active}, K-Center-Greedy (KCG)~\cite{sener2018active}. 
A detailed introduction of these baseline methods can be found in Appendix A.5.


\para{Settings.}
For each method, we use the hyper-parameter tuning toolkit~\cite{li2021openbox, li2021volcanoml} or follow the original papers to find the optimal hyperparameters. To eliminate randomness, we repeat each method ten times and report the mean performance. Specifically, each learning-based active learning strategy
chooses a small set of labeled nodes as an initial pool. Like AGE,
we consider the label balance, and two nodes are randomly selected
for each class. Note that \sys can guide the node selection from scratch under the new diversified influence maximization criterion.
The parameter settings, implementation detail, and reproduction instructions can be found in Appendix A.4.

\subsection{Performance Comparison}

In this section, we compare the performance of \sys with the aforementioned baselines in two typical data selection scenarios: active learning and core-set selection.

\para{Active Learning.}
Let $C$ be the number of classes for each dataset (e.g., 7 for Cora and 3 for PubMed). We choose the budget $\mathcal{B}=|S|$ from a range of $2C$ to $20C$ labeled nodes, and report the test accuracy of the GCN model trained on the selected labeled node set $S$ along with the number of labeled nodes for training in Figure~\ref{fig.al_performance}. As the labeling cost is proportional to the labeling budget, Figure~\ref{fig.al_performance} equivalently shows the improvement in terms of labeling cost.

Compared to the other baselines, both \sys(ball-D) and \sys(NN-D) quickly boost the accuracy at the beginning and consistently outperform the baselines as the number of labeled nodes grows.
Concretely, the competitive baseline AGE has to label 120 nodes to achieve the accuracy of 71.4\% on Citeseer, while \sys(NN-D) only needs 60 labeled nodes to achieve similar results, indicating that \sys could cut the cost (e.g., money) by half for users.
This improvement demonstrates the effectiveness of our DIM selection criterion. 
Moreover, \sys can avoid the sensitivity to model accuracy inherited from the learning-based methods like AGE and ANRMAB, especially when the model is under-fitted given a small labeling budget. Concretely, \sys(ball-D) outperforms AGE by a margin of 8.8\% on Citeseer when 18 labeled nodes are used.

To demonstrate the improvement of \sys on the final performance, we also provide the test accuracy using all the $20C$ labeled nodes.
Table~\ref{accuracy_120} shows that AGE and ANRMAB outperform the Random and Degree method on most graph datasets, and \sys(ball-D) and \sys(NN-D) further boost the performance by a significant margin.
\sys(ball-D) improves the test accuracy of the best baseline AGE by 1.7-2.8\% on the three citation networks while \sys(NN-D) also outperforms AGE by a margin of 0.9\% on Reddit. 
Note that \sys(ball-D) outperforms \sys(NN-D) on three citation networks, while \sys(NN-D) performs better than \sys(NN-D) on Reddit. 
It is because the citation networks have low degrees (i.e., are sparse). Thus the model performance is more vulnerable to the variance of node feature distance (indirect influence). 
\sys(NN-D) performs better on dense graphs (Reddit) since it can minimize the total distance, while \sys(ball-D) outperforms \sys(NN-D) on sparse graphs by emphasizing variance-reduction.

We also conduct an experiment on the largest benchmark dataset ogbn-papers100M. For such large-scale graphs, the learning-based methods such as AGE and ANRMAB require extremely long training time. In our experiment, both AGE and ANRMAB fail to finish the training within two weeks, and this indicates that \sys(26.1 hours) could achieve at least an order of magnitude speedup. Besides, we observe that \sys(ball-D) outperforms the second-best method KCG by a large margin of 1.3\%.

\begin{figure}[tp]
\centering  
\subfigure{
\label{Fig.ensemble}
\includegraphics[width=0.25\textwidth]{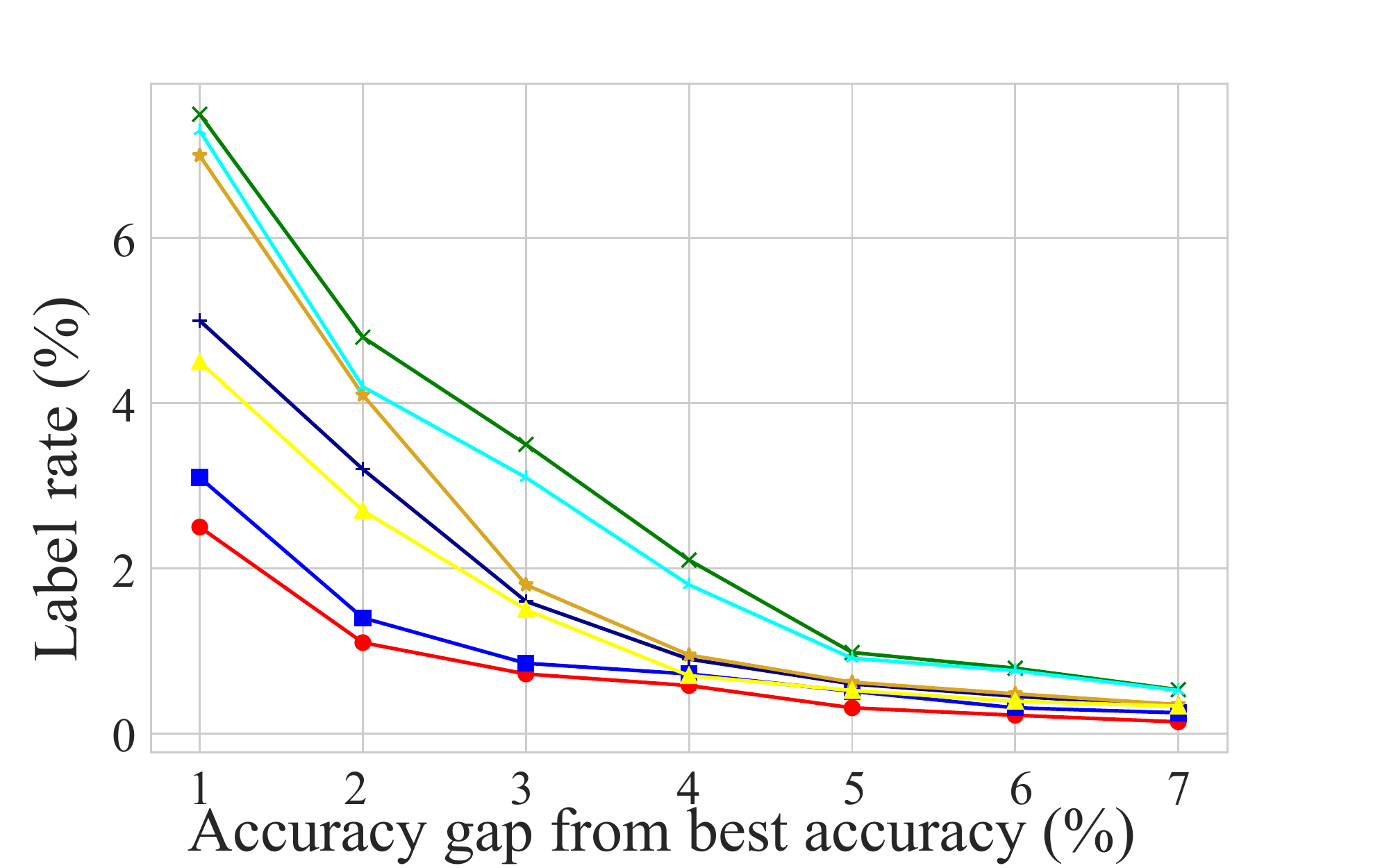}}
\subfigure{
\label{Fig.ensemble}
\includegraphics[width=0.08\textwidth]{figure/legend1.pdf}}
\vspace{-3.5mm}
\caption{The label rate (the percentage of the whole available node set) to reach a accuracy gap from the best accuracy.}
\label{fig.coreset_performance}
\end{figure}

\begin{table}[tpb]
\caption{Test accuracy using $20C$ labeled nodes in active learning scenario. OOT means ``out of time''.}
\centering
{
\noindent
\renewcommand{\multirowsetup}{\centering}
\resizebox{0.95\linewidth}{!}{
\begin{tabular}{cccccccc}
\toprule
\textbf{Method}& 
 \textbf{Cora}& \textbf{Citeseer}& \textbf{PubMed}& \textbf{Reddit}& \textbf{ogbn-papers100M}\\
\midrule
Random&78.8&70.8&78.9&91.1&51.2 \\
Degree&81.8&70.9&78.3&91.4&51.5 \\
AGE&82.5&71.4&79.4&91.6&OOT\\
ANRMAB&82.4&70.6&78.2&91.5&OOT \\
KCG&82.6&71&79.3&91.3&51.6\\
\hline
\sys(NN-D)&83.3&73.7&80.8&\textbf{92.5}&52.6 \\
\sys(ball-D)&\textbf{84.2}&\textbf{74.2}&\textbf{81.8}&92.3&\textbf{52.9}\\
\bottomrule
\end{tabular}}}
 \label{accuracy_120}
\end{table}

\para{Core-set Selection.}
Core-set selection starts with a large labeled or unlabeled dataset and aims to find a small subset that accurately approximates the entire dataset.
We first train the exact model with all training labels (i.e., 18217) and get the test accuracy (86.5\%) on PubMed.
As shown in Figure~\ref{fig.coreset_performance}, we evaluate the number of labeled nodes for each core-set selection method to achieve the corresponding accuracy gap. 
It is evident that both \sys(ball-D) and \sys(NN-D) significantly outperform the other baselines and achieve the same accuracy gap using much fewer labeled nodes.
Concretely, to achieve an accuracy gap of 2\% on PubMed, AGE needs 3.2\% of all labeled training nodes while \sys(ball-D) needs only 1\% of these nodes, which means \sys(ball-D) outperform AGE by 3.2$\times$ in terms of data efficiency. 
More results on the other graphs are provided in Appendix A.7.

\begin{figure}[tp]
\centering   
\subfigure[Small-scale datasets]{
\label{fig.effi_gpu}
\includegraphics[width=0.252\textwidth]{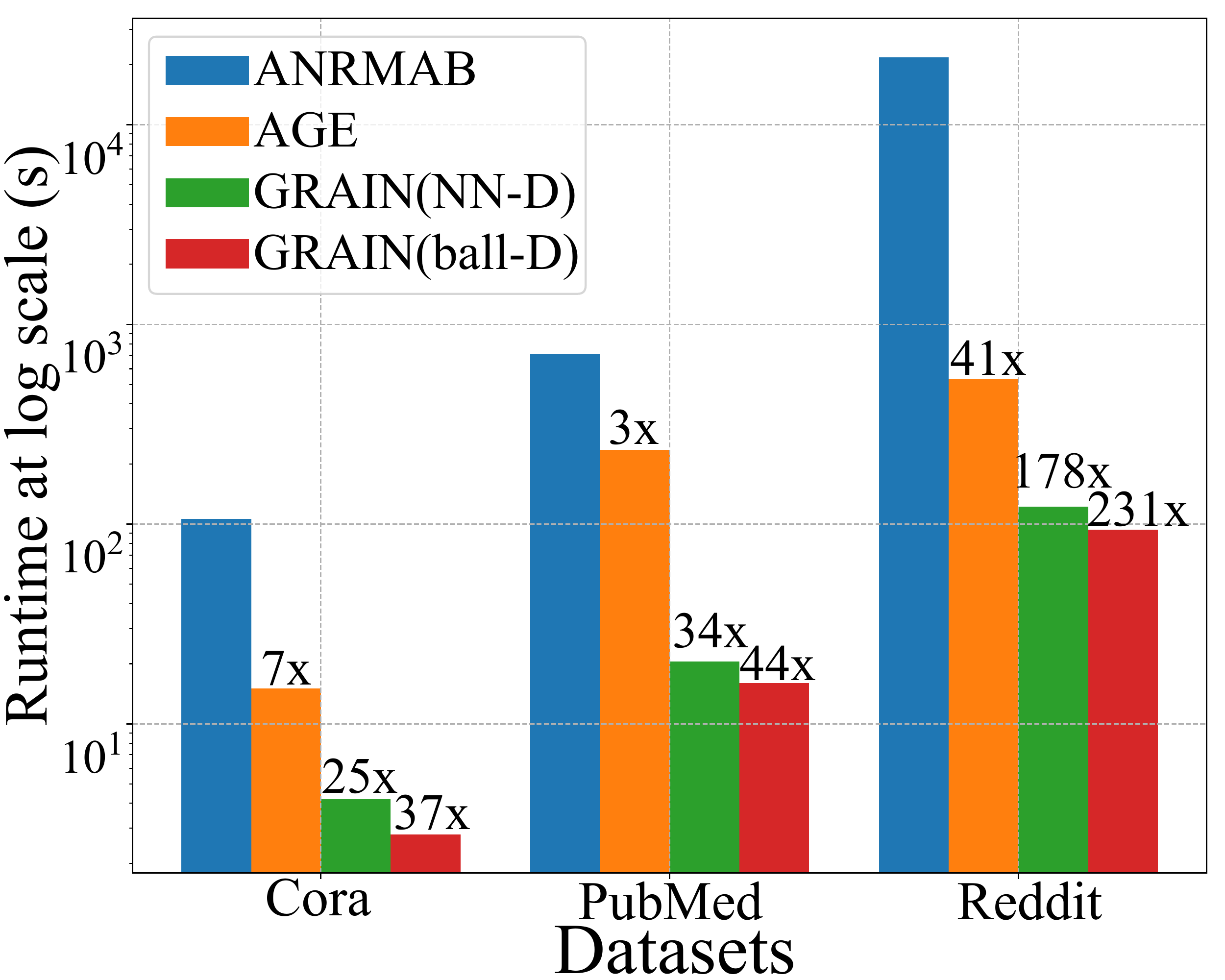}}
\subfigure[{Large ogbn-papers100M}]{
\label{fig.effi_nogpu}
\includegraphics[width=0.208\textwidth]{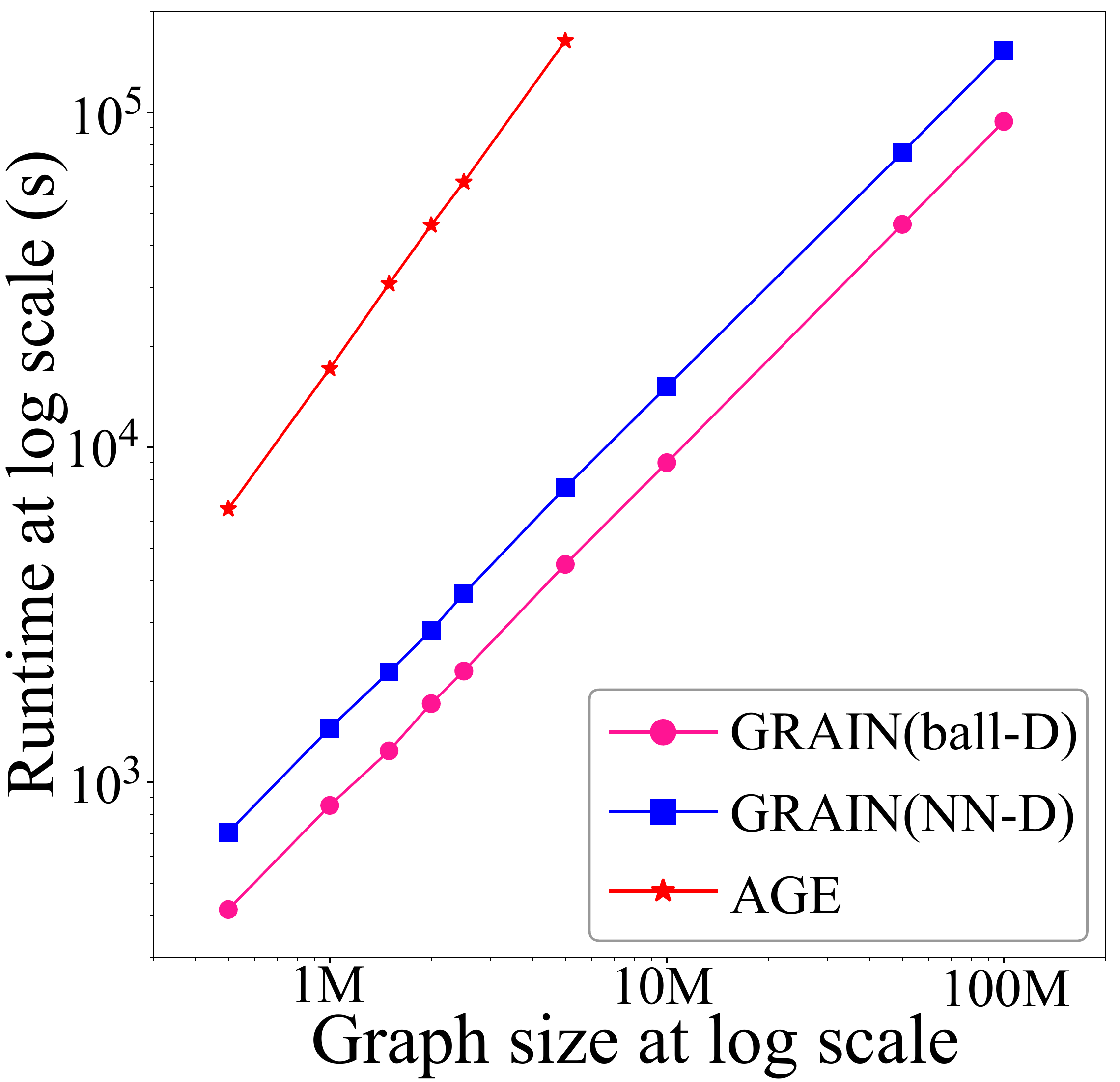}}
\vspace{-4.5mm}
\caption{End-to-end runtime (at log scale) using GPU. }
\label{speed}
\end{figure}

\subsection{Efficiency and Scalability Analysis}
Another advantage of \sys is its high efficiency in node selection. We report the end-to-end runtime of active learning methods in Figure~\ref{speed}. 
Note that the end-to-end runtime results include the overhead from both the node importance measurement and node selection, and do not consider the human-intensive Oracle labeling time which depends on the complexity of a specific task and monetary budget. 
Moreover, \sys is oracle-free and model-free, that is, the labeling process is not required in the node selection. For comparison, the learning-based process of AGE and ANRMAB has to wait for the oracle to provide labels in each iteration.

Figure~\ref{fig.effi_gpu} shows that \sys(ball-D) obtains a speedup of 37×, 44×, and 231× over ANRMAB on Cora, PubMed, and Reddit respectively on GPU.
To test the scalability on large-scale graphs, we sample different scales on ogbn-papers100M. 
Since training GCN on such a large graph will lead to the out-of-memory exception, instead, we use SGC as the training model for both AGE and \sys.
Figure~\ref{fig.effi_nogpu} demonstrates \sys achieves a linear scaling on the ogbn-papers dataset, and at least one order of magnitudes faster than AGE. 
To select the same number of labeled nodes on the 100M graph scale, 
\sys(ball-D) and \sys(NN-D) take 26.1 hours and 42.5 hours respectively, while it takes AGE more than one year to achieve this by estimating its runtime trend. 

\begin{table}[tpb]
\caption{\small Influence of different components.}
\centering
{
\noindent
\renewcommand{\multirowsetup}{\centering}
\resizebox{0.95\linewidth}{!}{
\begin{tabular}{ccccccc}
\toprule
    \textbf{Method}&\textbf{Cora}&$\Delta$&\textbf{Citeseer}&$\Delta$&\textbf{PubMed}&$\Delta$\\
\midrule
No Magnitude& 81.1 &-3.1 & 70.8 &-3.4 & 76.7 &-5.1\\
No Diversity& 82.2 &-2.0 & 71.2 &-3.0 & 79.9 &-1.9\\
Classic Coverage &82.3 &-1.9 &71.5 &-2.7 &80.2 & -1.6 \\
\midrule
\textbf{\sys(ball-D)}& \textbf{84.2} &--& \textbf{74.2} &--& \textbf{81.8} &--\\
\bottomrule
\end{tabular}}}
\label{Ablation}
\end{table}

\begin{table}[tpb]
\vspace{-3mm}
\caption{Test accuracy of different models on PubMed.}
\vspace{-2mm}
\centering
{
\noindent
\renewcommand{\multirowsetup}{\centering}
\resizebox{0.6\linewidth}{!}{
\begin{tabular}{ccccccc}
\toprule
\textbf{Method}& 
 \textbf{SGC}& \textbf{APPNP}& \textbf{GCN}& \textbf{MVGRL}\\
\midrule
Random&77.6&79.2&78.9&79.3 \\
Degree&77.3&78.6&78.3&78.7\\
AGE&78.8&79.9&79.4&79.9 \\
ANRMAB&77.8&78.7&78.2&78.9 \\
KCG&78.2&79.7&79.3&79.8\\
\hline
\sys(NN-D)&80.2&81.6&80.8&81.8 \\
\sys(ball-D)&\textbf{81.1}&\textbf{82.0}&\textbf{81.8}&\textbf{82.1}\\
\bottomrule
\end{tabular}}}
 \label{accuracy_model}
\end{table}

\subsection{Ablation Study}
Our method combines both the influence magnitude and the diversity measures. To verify the necessity of each component, we evaluate \sys(ball-D) while disabling one measure at a time.
We evaluate \sys(ball-D): {\it (i)} without the diversity and only maximize $|\sigma(S)|$ (called "No Diversity"); {\it (ii)} without the influence magnitude and the goal is to cover the maximum number of nodes with the balls generated from selected nodes $S$ (called "No  Magnitude"); {\it (iii)} replace $|\sigma(S)|$  with $S$ when computing diversity (called "Classic Coverage").
Table~\ref{Ablation} displays the results of these methods.

\para{Influence Magnitude.} The influence magnitude has a significant impact on model performance on all datasets, and it is more important than diversity since removing it will lead to a significant performance gap. For example, the gap on PubMed is 5.1\%, which is much higher than the other gap (1.9\%). The higher the influence magnitude is, the more labeled nodes we can use to train a GNN.

\para{Influence Diversity.} 
The test accuracy decreases in all three datasets if we ignore the influence diversity.
For example, the performance gap is as large as 3.0\% if the influence diversity is removed on Citeseer.
The higher the influence diversity is, the more nodes are influenced in the aggregated feature space.

To further demonstrate the novelty of our diversity function, we add a baseline that adopts a classic coverage approach for diversity measurement. The result shows the test accuracy of \sys has decreased by a large margin if we replace $\sigma(S)$ with $S$ in the ball-diversity function, which verifies the necessity of considering propagation in the diversity measurement.

\begin{figure}[tp]
\centering  
\subfigure[\sys(ball-D)]{
\label{fig:inter_grain}
\includegraphics[width=0.23\textwidth]{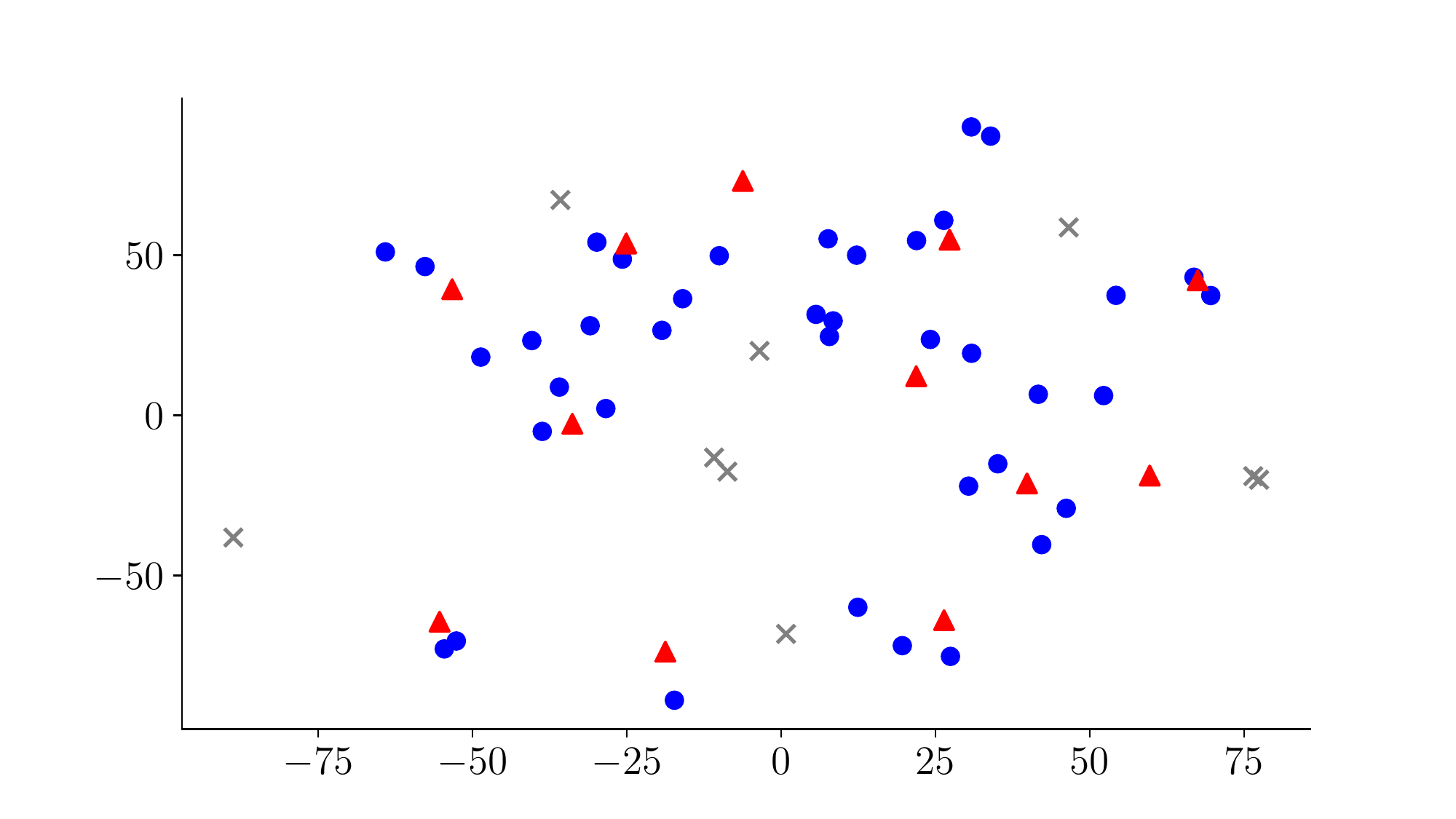}}
\subfigure[AGE]{
\label{fig:inter_age}
\includegraphics[width=0.23\textwidth]{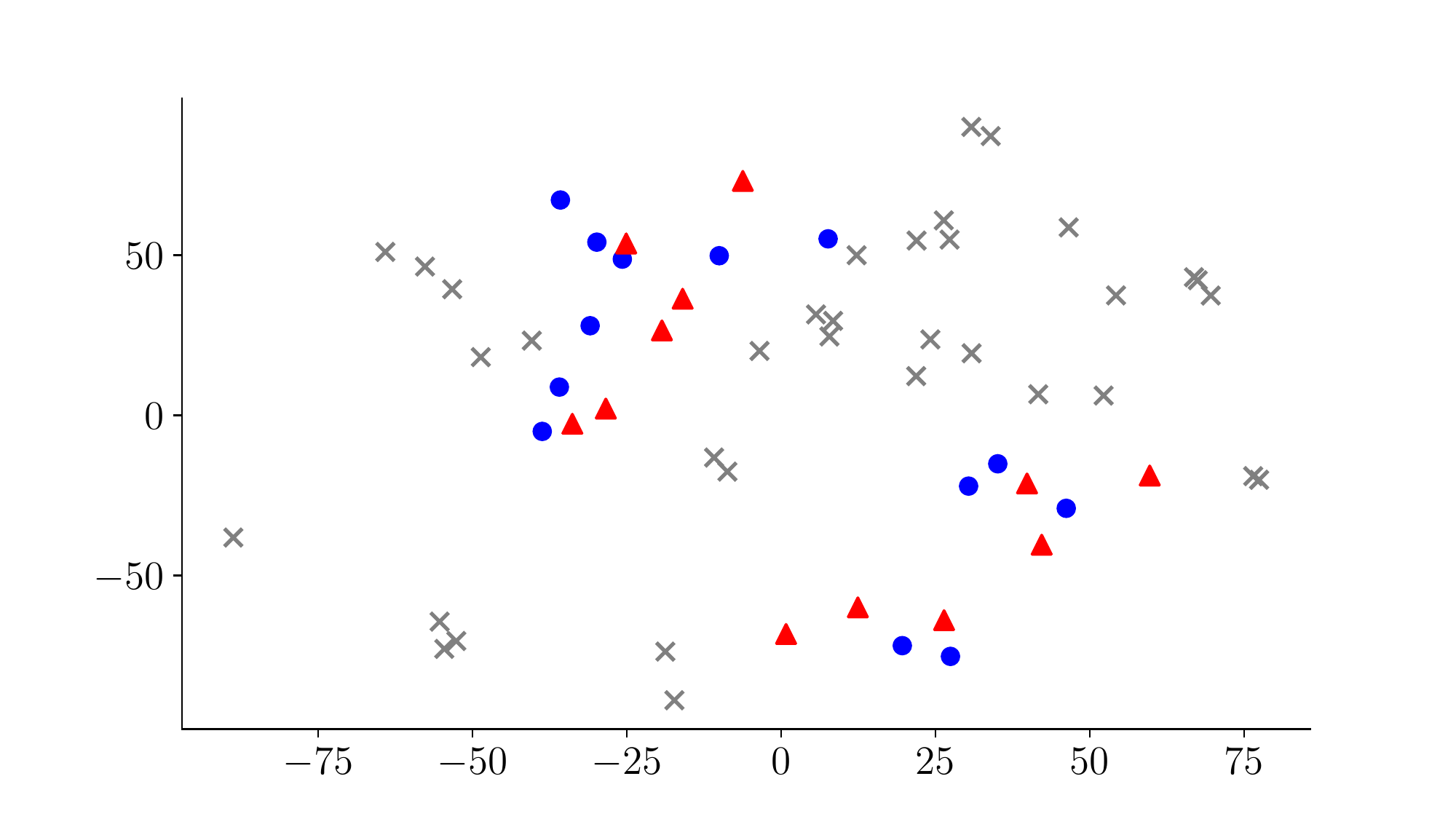}}
\caption{Distribution of the seed nodes and activated nodes selected by \sys(ball-D) and AGE. The red triangle refers to the seed nodes, the blue circle refers to the activated nodes, and the gray x refers to the non-activated nodes.}
\label{interpretability}
\end{figure}

\subsection{Generalization}
\label{gen}
In addition to GCN, \sys can also be applied to a large variety of GNN variants.
GCN, SGC, MVGRL and APPNP are four representative GNNs~\cite{chen2020graph} which adopt different message passings. Unlike the coupled GCN, both SGC and APPNP are decoupled, while their orderings when doing feature propagation and transformation are different. Besides, MVGRL is a classic self-supervised GNN.
We test the generalization ability of \sys by evaluating the aforementioned four types of GNNs on $20C$ nodes selected by \sys and other baselines in the AL scenario, and the corresponding results are shown in Table~\ref{accuracy_model}. 
The results suggest that both \sys(ball-D) and \sys(NN-D) consistently outperform the other baselines, regardless of the coupled and decoupled GNNs.
Moreover, the result also shows our proposed method \sys can significantly outperform the compared baselines on the top of more sophisticated self-supervised GNN such as MVGRL~\cite{hassani2020contrastive}. As shown in the table, the test accuracy of \sys(ball-D) could outperform KCG and AGE by more than 2\% on PubMed.
Therefore, we conclude that \sys can generalize to different types of GNNs well.

\subsection{Model Interpretability}
In this section, we illustrate the insight of our method and conduct experiments to explain why \sys obtains better performance from the perspective of influence magnitude and influence diversity.
To depict the data distribution, we randomly choose 60 nodes from Citeseer and select 12 of them using \sys(ball-D) and AGE as the labeled set.
Then we mark the seed nodes, activated nodes, and non-activated nodes in Figure~\ref{interpretability} and use t-SNE~\cite{maaten2008tsne} to visualize them in the aggregated feature space.

For influence magnitude |$\sigma(S)$|, we observe that the number of non-activated nodes of AGE is larger than \sys, which means more unlabeled nodes are activated and get involved in the model training using \sys(ball-D). 
For influence diversity $D(S)$, it is evident that the activated nodes of \sys(ball-D) scatter over different regions of the whole dataset, while the nodes gather in some specific area of AGE. In this way, more unlabeled nodes in \sys(ball-D) can be affected by the indirect influence from $\sigma(S)$, and the performance is boosted accordingly.

\section{Conclusion}
GNNs are emerging deep learning models that arise naturally from the requirements of applying neural network models on graphs.
Efficient and scalable data selection for GNN training is demanding but still challenging due to its inherent complexity. 
This paper advocates a novel perspective for GNN data selection by connecting it with social influence maximization.
\sys represents a critical step in this direction by showing both the feasibility and potential of this connection.
To this end, we define a new feature influence model to exploit the common patterns of GNNs and propose novel submodular influence and diversity functions.
Experiments show that \sys outperforms competitive baselines by a large margin in terms of both model performance and efficiency.


\appendix
\section{Outline}
This supplemental material is organized as follows:
\begin{description}
    \item[A.1] Proof.
    \item[A.2] Datasets description.
    \item[A.3] More details about the compared GNNs.
    \item[A.4] Concrete settings.
    \item[A.5] More details about the other node selection methods.
    \item[A.6] Efficiency comparison on CPU.
    \item[A.7] Core-set selection on other graphs. 
\end{description}

\para{A.1\ \  Proof}

\begin{theorem}
\label{theorem}
Function $D_{NN}(S)$ is monotone and submodular with respect to $S$.
\begin{equation}
    D_{NN}(S) = \sum_{u\in \mathcal{V}}\left({d_{max} - \min_{v\in \sigma(S)}\ d\left(\mathbf{X}^{(k)}_u, \mathbf{X}^{(k)}_v\right)}\right)
\end{equation}

\end{theorem}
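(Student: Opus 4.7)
The plan is to decompose $D_{NN}$ as a sum of per-target functions and prove monotonicity and submodularity summand-by-summand. For each fixed $u \in \mathcal{V}$, define
$$g_u(S) \;=\; d_{max} - \min_{v\in \sigma(S)} d\!\left(\mathbf{X}^{(k)}_u, \mathbf{X}^{(k)}_v\right),$$
so $D_{NN}(S) = \sum_{u\in\mathcal{V}} g_u(S)$. Because a nonnegative sum preserves both monotonicity and submodularity, it suffices to show each $g_u$ has these two properties as a set function of $S$.

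First I would record the monotonicity of the activation operator $\sigma$ itself: $S\subseteq T$ implies $\sigma(S)\subseteq \sigma(T)$. This is immediate from $I_v(S,k)=\max_{w\in S} I_v(w,k)\leq \max_{w\in T} I_v(w,k)=I_v(T,k)$, so any $v$ with $I_v(S,k)>\theta$ also satisfies $I_v(T,k)>\theta$. Monotonicity of $g_u$ (and hence of $D_{NN}$) then follows because minimizing distance over a superset can only decrease the minimum.

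The substantive step is submodularity of $g_u$. The crux is a shrinking-marginals lemma for $\sigma$: for $S\subseteq T$ and $v\notin T$,
$$\sigma(T\cup\{v\})\setminus\sigma(T) \;\subseteq\; \sigma(S\cup\{v\})\setminus\sigma(S).$$
To see this, take $w\in \sigma(T\cup\{v\})\setminus\sigma(T)$. Since $I_w(T\cup\{v\},k)=\max(I_w(T,k),I_w(v,k))>\theta$ while $I_w(T,k)\leq\theta$, we must have $I_w(v,k)>\theta$, which alone forces $w\in\sigma(S\cup\{v\})$. Combined with $I_w(S,k)\leq I_w(T,k)\leq\theta$, we also get $w\notin\sigma(S)$, proving the inclusion.

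From here submodularity is a routine distance comparison. Writing $\delta_S=\min_{w\in\sigma(S)} d(\mathbf{X}^{(k)}_u,\mathbf{X}^{(k)}_w)$ and $d^{*}_{S,v}=\min_{w\in\sigma(S\cup\{v\})\setminus\sigma(S)} d(\mathbf{X}^{(k)}_u,\mathbf{X}^{(k)}_w)$ (with the convention that an empty min is $+\infty$), one checks $g_u(S\cup\{v\})-g_u(S)=\max\bigl(0,\ \delta_S - d^{*}_{S,v}\bigr)$. Monotonicity of $\sigma$ gives $\delta_S\geq\delta_T$, and the shrinking-marginals lemma gives $d^{*}_{S,v}\leq d^{*}_{T,v}$, so the marginal gain for $S$ dominates that for $T$ in every case of the outer $\max$. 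Summing over $u\in\mathcal{V}$ delivers submodularity of $D_{NN}$. The main obstacle is the shrinking-marginals lemma, which is precisely where the $\max$-form of $I_v(S,k)$ is essential; once it is in place, the rest is standard facility-location-style bookkeeping.
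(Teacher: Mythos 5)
Your proof is correct and follows essentially the same route as the paper's: decompose $D_{NN}$ into per-target summands $g_u$, use $\sigma(S)\subseteq\sigma(T)$ for monotonicity, and exploit the fact that $I_v(S,k)$ is a max over seeds (equivalently, $\sigma(S\cup\{v\})=\sigma(S)\cup\sigma(\{v\})$) for submodularity. Your shrinking-marginals inclusion $\sigma(T\cup\{v\})\setminus\sigma(T)\subseteq\sigma(S\cup\{v\})\setminus\sigma(S)$ together with the closed-form marginal gain $\max\bigl(0,\ \delta_S-d^{*}_{S,v}\bigr)$ is just a cleaner packaging of the paper's three-case analysis on where the minimizer of $\min_{w\in\sigma(T\cup\{v\})}d(\mathbf{X}^{(k)}_u,\mathbf{X}^{(k)}_w)$ lands, so the substance is the same.
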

\begin{proof}
By definition, $\sigma(S) \subseteq \sigma(T)$ if $S \subseteq T$. We first prove the following lemmas:
\begin{lemma}
\label{lemma:nondecreasing}
Function $f(S)=\min_{v \in \sigma(S)}d\left(\mathbf{X}^{(k)}_u, \mathbf{X}^{(k)}_v\right)$ is nonincreasing given node $u$ with respect to $S$.
\end{lemma}
\begin{proof}
Thus, $\forall S\subseteq T$,
\begin{equation}
\begin{aligned}
    &f(T)-f(S)\\
    =&\min\left(f(S),\min_{v \in \sigma(T)\backslash \sigma(S)}d\left(\mathbf{X}^{(k)}_u, \mathbf{X}^{(k)}_x\right)\right)-f(S)\\
    \leq&\ 0.
\end{aligned}
\end{equation}
\end{proof}

\begin{lemma}
\label{lemma:submodular}
Function $f(S)=-\min_{v \in \sigma(S)}d\left(\mathbf{X}^{(k)}_u, \mathbf{X}^{(k)}_v\right)$ is submodular given node $u$ with respect to $S$.
\end{lemma}

\begin{proof}
$\forall S \subseteq T, x \notin T$, denote
\begin{equation}
\begin{aligned}
y=\mathop{\arg\min}_{v \in \sigma(T\cup\{x\})}d\left(\mathbf{X}^{(k)}_u, \mathbf{X}^{(k)}_v\right).
\end{aligned}
\nonumber
\end{equation}
Since $\sigma(T\cup\{x\})=\sigma(S)\cup\sigma(\{x\})\cup\left(\sigma(T)\backslash\sigma(S)\right)$, \\
1. If $y \in \sigma(S)$, 
\begin{equation}
\begin{aligned}
    &\mathop{\min}_{v \in \sigma(S)}d\left(\mathbf{X}^{(k)}_u, \mathbf{X}^{(k)}_v\right)=\mathop{\min}_{v \in \sigma(S\cup\{x\})}d\left(\mathbf{X}^{(k)}_u, \mathbf{X}^{(k)}_v\right)\\
    =&\mathop{\min}_{v \in \sigma(T)}d\left(\mathbf{X}^{(k)}_u, \mathbf{X}^{(k)}_v\right)=\mathop{\min}_{v \in \sigma(T\cup\{x\})}d\left(\mathbf{X}^{(k)}_u, \mathbf{X}^{(k)}_v\right)\\
    =&\ d\left(\mathbf{X}^{(k)}_u, \mathbf{X}^{(k)}_y\right).
\end{aligned}
\nonumber
\end{equation}
Therefore,
\begin{equation}
\begin{aligned}
    &\mathop{\min}_{v \in \sigma(S)}d\left(\mathbf{X}^{(k)}_u, \mathbf{X}^{(k)}_v\right)-\mathop{\min}_{v \in \sigma(S\cup\{x\})}d\left(\mathbf{X}^{(k)}_u, \mathbf{X}^{(k)}_v\right)\\
    =&\mathop{\min}_{v \in \sigma(T)}d\left(\mathbf{X}^{(k)}_u, \mathbf{X}^{(k)}_v\right)-\mathop{\min}_{v \in \sigma(T\cup\{x\})}d\left(\mathbf{X}^{(k)}_u, \mathbf{X}^{(k)}_v\right).
\end{aligned}
\label{proof_situation1}
\end{equation}
\\ 
2. If $y \in \sigma(\{x\})$,
\begin{equation}
\begin{aligned}
    &\mathop{\min}_{v \in \sigma(S)}d\left(\mathbf{X}^{(k)}_u, \mathbf{X}^{(k)}_v\right)-\mathop{\min}_{v \in \sigma(S\cup\{x\})}d\left(\mathbf{X}^{(k)}_u, \mathbf{X}^{(k)}_v\right)\\
    =&\mathop{\min}_{v \in \sigma(S)}d\left(\mathbf{X}^{(k)}_u, \mathbf{X}^{(k)}_v\right)-d\left(\mathbf{X}^{(k)}_u, \mathbf{X}^{(k)}_y\right)\\
    \geq &\mathop{\min}_{v \in \sigma(T)}d\left(\mathbf{X}^{(k)}_u, \mathbf{X}^{(k)}_v\right)-d\left(\mathbf{X}^{(k)}_u, \mathbf{X}^{(k)}_y\right)\\
    =&\mathop{\min}_{v \in \sigma(T)}d\left(\mathbf{X}^{(k)}_u, \mathbf{X}^{(k)}_v\right)-\mathop{\min}_{v \in \sigma(T\cup\{x\})}d\left(\mathbf{X}^{(k)}_u, \mathbf{X}^{(k)}_v\right).
\end{aligned}
\label{proof_situation2}
\end{equation}
The third line of Eq. \eqref{proof_situation2} is due to Lemma \ref{lemma:nondecreasing}. \\ 
3. If $y \in \sigma(T) \backslash \sigma(S)$,
\begin{equation}
\begin{aligned}
    \mathop{\min}_{v \in \sigma(T)}d\left(\mathbf{X}^{(k)}_u, \mathbf{X}^{(k)}_v\right)=\mathop{\min}_{v \in \sigma(T\cup\{x\})}d\left(\mathbf{X}^{(k)}_u, \mathbf{X}^{(k)}_v\right)=\ d\left(\mathbf{X}^{(k)}_u, \mathbf{X}^{(k)}_y\right).
\end{aligned}
\nonumber
\end{equation}
Since 
\begin{equation}
\begin{aligned}
\mathop{\min}_{v \in \sigma(S)} d\left(\mathbf{X}^{(k)}_u, \mathbf{X}^{(k)}_v\right)-\mathop{\min}_{v \in \sigma(S\cup\{x\})}d\left(\mathbf{X}^{(k)}_u, \mathbf{X}^{(k)}_v\right) \geq\ 0
\end{aligned}
\nonumber
\end{equation}
due to Lemma \ref{lemma:nondecreasing}, we have,
\begin{equation}
\begin{aligned}
    &\mathop{\min}_{v \in \sigma(S)}d\left(\mathbf{X}^{(k)}_u, \mathbf{X}^{(k)}_v\right)-\mathop{\min}_{v \in \sigma(S\cup\{x\})}d\left(\mathbf{X}^{(k)}_u, \mathbf{X}^{(k)}_v\right)\\
    \geq&\mathop{\min}_{v \in \sigma(T)}d\left(\mathbf{X}^{(k)}_u, \mathbf{X}^{(k)}_v\right)-\mathop{\min}_{v \in \sigma(T\cup\{x\})}d\left(\mathbf{X}^{(k)}_u, \mathbf{X}^{(k)}_v\right).
\end{aligned}
\label{proof_situation3}
\end{equation}
\end{proof}

Now we prove Theorem \ref{theorem}:\\ 
1. $\forall S \subseteq T, x \notin T$,
\begin{equation}
\begin{aligned}
    &D_{NN}(S\cup\{x\})-D_{NN}(S)\\
    =&\sum_{u \in V}\mathop{\min}_{v \in \sigma(S)}d\left(\mathbf{X}^{(k)}_u, \mathbf{X}^{(k)}_v\right)-\mathop{\min}_{v \in \sigma(S\cup\{x\})}d\left(\mathbf{X}^{(k)}_u, \mathbf{X}^{(k)}_v\right)\\
    \geq&\sum_{u \in V}\mathop{\min}_{v \in \sigma(T)}d\left(\mathbf{X}^{(k)}_u, \mathbf{X}^{(k)}_v\right)-\mathop{\min}_{v \in \sigma(T\cup\{x\})}d\left(\mathbf{X}^{(k)}_u, \mathbf{X}^{(k)}_v\right)\\
    =&\ D_{NN}(T\cup\{x\})-D_{NN}(T).
\end{aligned}
\label{eq:submodular}
\end{equation}
The third line of Eq. \eqref{eq:submodular} is due to Lemma \ref{lemma:submodular}. Therefore, $D_{NN}(S)$ is submodular with respect to S. \\
2. $\forall S \subseteq T$,
\begin{equation}
\begin{aligned}
    &D_{NN}(T)-D_{NN}(S)\\
    =&\sum_{u \in V}\mathop{\min}_{v \in \sigma(S)}d\left(\mathbf{X}^{(k)}_u, \mathbf{X}^{(k)}_v\right)-\mathop{\min}_{v \in \sigma(T)}d\left(\mathbf{X}^{(k)}_u, \mathbf{X}^{(k)}_v\right)\\
    \geq&\ 0.
\end{aligned}
\label{eq:monotone}
\end{equation}
The last line of Eq. \eqref{eq:monotone} is due to Lemma \ref{lemma:nondecreasing}. Therefore, $D_{NN}(S)$ is nondecreasing with respect to S.

\end{proof}

\para{A.2\ \  Datasets description}
\begin{table*}[t]
\small
\centering
\caption{Overview of the Four Datasets} \label{Dataset}
\begin{tabular}{ccccccccc}
\toprule
\textbf{Dataset}&\textbf{\#Nodes}& \textbf{\#Features}&\textbf{\#Edges}&\textbf{\#Classes}&\textbf{\#Train/Val/Test}&\textbf{Task type}&\textbf{Description}\\
\midrule
Cora& 2,708 & 1,433 &5,429&7& 1208/500/1000 & Transductive&citation network\\
Citeseer& 3,327 & 3,703&4,732&6& 1827/500/1000 & Transductive&citation network\\
Pubmed& 19,717 & 500 &44,338&3& 18217/500/1000 & Transductive&citation network\\
ogbn-papers100M & 111,059,956 & 128 & 1,615,685,872 & 172 &
110M/125K/214K&Transductive &citation network \\
\midrule
Reddit& 232,965 & 602 & 11,606,919 & 41 &  155K/23K/54K & Inductive&social network \\
\bottomrule
\end{tabular}
\end{table*}

\textbf{Cora}, \textbf{Citeseer}, and \textbf{Pubmed}\footnote{https://github.com/tkipf/gcn/tree/master/gcn/data} are three popular citation network datasets, and we follow the public training/validation/test split in GCN ~\cite{DBLP:conf/iclr/KipfW17}.
In these three networks, papers from different topics are considered as nodes and the edges are citations among the papers.  The node attributes are binary word vectors and class labels are the topics papers belong to.

\noindent\textbf{Reddit} is a social network dataset derived from the community structure of numerous Reddit posts. It is a well-known inductive training dataset and the training/validation/test split in our experiment is the same as the split in GraphSAGE~\cite{hamilton2017inductive}. The public version of Reddit and Flickr provided by GraphSAINT\footnote{https://github.com/GraphSAINT/GraphSAINT} is used in our paper.

\noindent\noindent\textbf{ogbn-papers100M} is a paper citation dataset with 111 million papers indexed by MAG~\cite{wang2020microsoft} in it. This dataset is known as the largest existing public node classification dataset currently and is much larger than others. We follow the official training/validation/test split and metric released in official website\footnote{https://github.com/snap-stanford/ogb} and official paper~\cite{hu2020ogb}.

\para{A.3\ \  More details about the compared GNNs}

The main characteristic of all baselines are listed bellow:
\begin{itemize}
    \item \textbf{GCN}~\cite{DBLP:conf/iclr/KipfW17} produces node embedding vectors by truncating the Chebyshev polynomial to the first-order neighborhoods.
    
    \item \textbf{APPNP}~\cite{klicpera2018predict} uses the relationship between GCN and PageRank to derive an improved propagation scheme based on personalized PageRank.
    
    \item \textbf{SGC}~\cite{wu2019simplifying} reduces the excess complexity of GCN through successively removing nonlinearities and collapsing weight matrices between consecutive layers.
    
    \item \textbf{MVGRL}~\cite{hassani2020contrastive} is a self-supervised approach for learning node and graph level representations by contrasting structural views of graphs.
\end{itemize}

\para{A.4\ \ Concrete settings}

To ensure impartiality, we use the 2-layer GCN with a hidden size of 128 on all datasets. The dropout rate is 0.85 and the L2 regularization is $5*10^{-4}$ for all datasets. For \sys(ball-D), the threshold $\theta$ in is 0.25 and the radius $r$ 0.05 for all datasets. For \sys(NN-D), the $\gamma$ for all datasets is set to 1. Besides, $\alpha$ is set as 0.1 in APPNP. For MVGRL, we use the official settings.
To obtain a well-trained model and ensure the reliability of the model-based selection criteria in AGE and ANRMAB, 200 training epochs are allocated in each iteration. 
In addition, we choose $k$ (the number of classes) nodes to label in each iteration, i.e., $k$ in Cora is 7.
Notably, for AGE and ANRMAB, the initial nodes are randomly selected, thus achieving the same accuracy as the Random method at the beginning.

We implement AGE following its open-sourced implementation and ANRMAB based on its original paper. The experiments are conducted on a machine with Ubuntu 16.04,48 Intel(R) Xeon(R) CPUs (E5-2650 v4 @ 2.20GHz), and four NVIDIA GeForce GTX 1080 Ti GPUs. The code is written using Python 3.6, Pytorch 1.7.1~\cite{paszke2019pytorch} and CUDA 10.1. Notably, in the AL setting, we assume that the label given by the oracle is always correct. This assumption is reasonable since handling noisy oracle is orthogonal with our work and our main contributions are the node selection criterion. In practice, either experts or crowds like MTurk~\cite{sorokin2008utility} are allowed to provide the label, and one could introduce an existing label noisy detection or correction techniques to work with \sys.

The detailed package requirements for \sys are listed on \url{https://github.com/zwt233/Grain}. Please check the ``requirements.txt'' in the root directory. 
To reproduce the end-to-end results of \sys, run the notebook file  \texttt{test.ipynb} under directory `examples'.

\para{A.5\ \  More details about the other node selection methods.}

We compare \sys with the following baselines: 
\begin{itemize}
    \item \textbf{Random}: Select nodes randomly;
    \item \textbf{Degree}: Select nodes with maximum degree;
    \item \textbf{AGE}~\cite{cai2017active}: Combine different query strategies linearly with time-sensitive parameters for GNNs;
    \item \textbf{ANRMAB}~\cite{gao2018active}: Adopt a multi-armed bandit mechanism for adaptive decision making to select nodes for GNNs; 
    \item \textbf{K-Center-Greedy} (KCG)~\cite{sener2018active}: Select nodes that maximize the distance from the nearest clustering center;
    \item \textbf{\sys(ball-D)}: Use balls in semantic space to control the influence magnitude and diversity with the objective~(13);
    \item \textbf{\sys(NN-D)}: Use the dynamic parameter $\lambda$ to combine influence magnitude and diversity with the objective~(11).
\end{itemize}
Both AGE and ANRMAB are designed for active learning on GNNs, which adopt the uncertainty, density, and node degree to select nodes. 
They have to train the evaluated model since both the uncertainty and density are calculated based on the model predictions. 
By contrast, \sys considers this problem from the perspective of diversified influence maximization and the selection process is model-free.

 \begin{figure*}[tp]
\centering  
\subfigure[Cora]{
\label{Fig.single}
\includegraphics[width=0.28\textwidth]{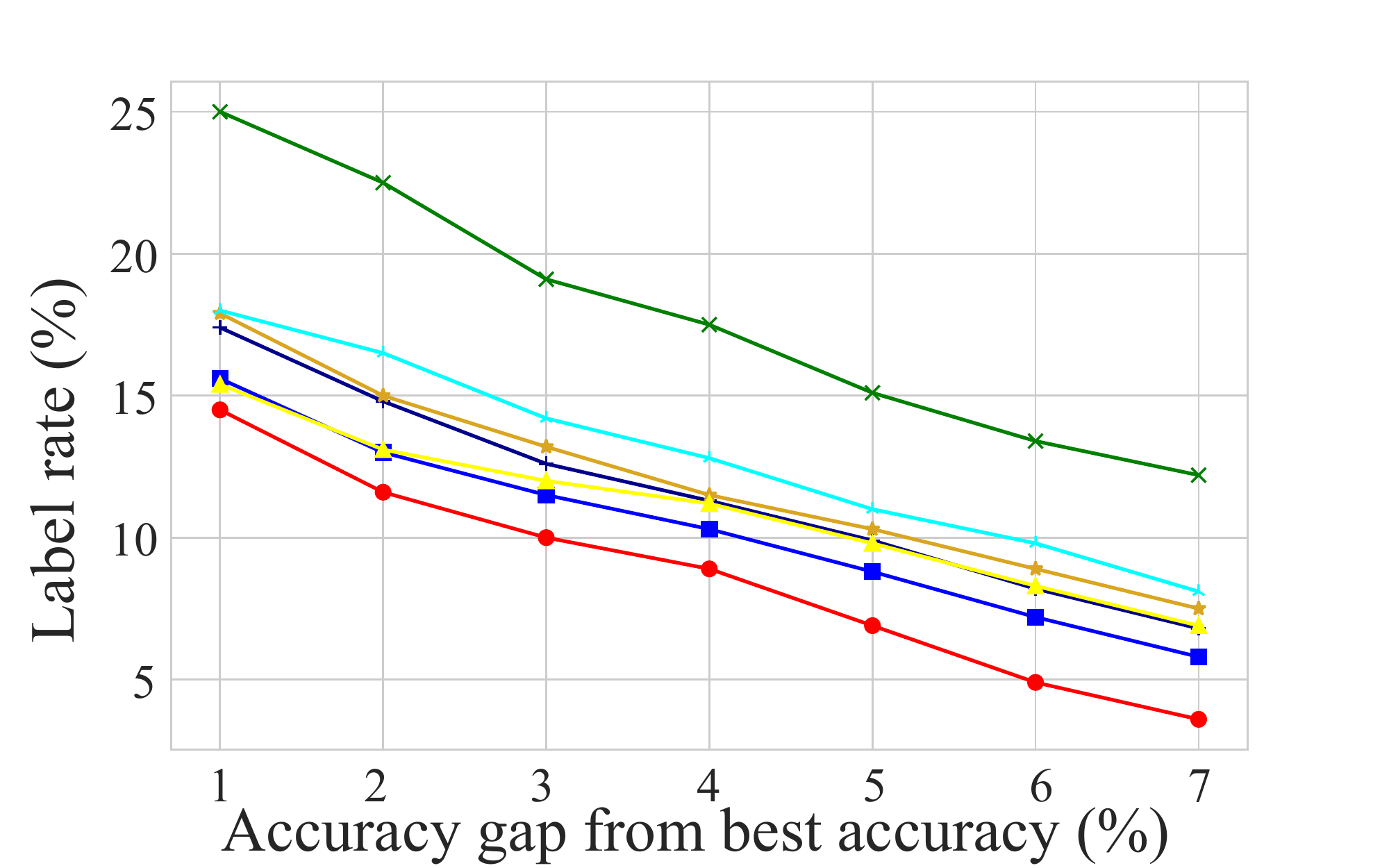}}
\subfigure[Citeseer]{
\label{Fig.ensemble}
\includegraphics[width=0.28\textwidth]{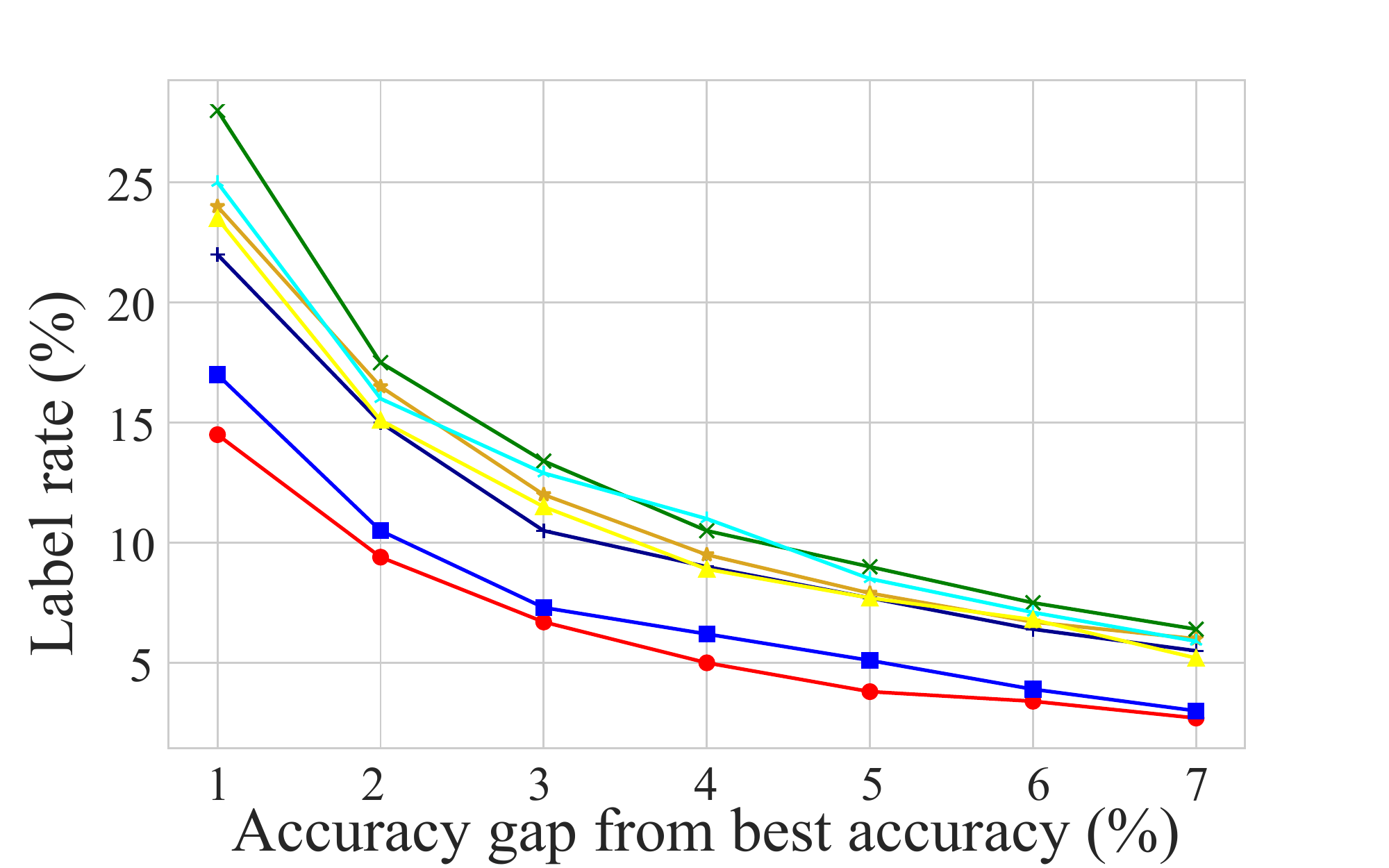}}
\subfigure[PubMed]{
\label{Fig.ensemble}
\includegraphics[width=0.28\textwidth]{figure/performance_coreset_PubMed_result.pdf}}
\subfigure{
\label{Fig.ensemble}
\includegraphics[width=0.08\textwidth]{figure/legend1.pdf}}
\vspace{-5.5mm}
\caption{The label rate (the percentage of the whole available node set) to reach a accuracy gap from the best accuracy.}
\label{fig.coreset_performance}
\vspace{-4mm}
\end{figure*}

\begin{figure}[tp]
\centering   
\subfigure{
\label{fig.effi_nogpu1}
\includegraphics[width=0.252\textwidth]{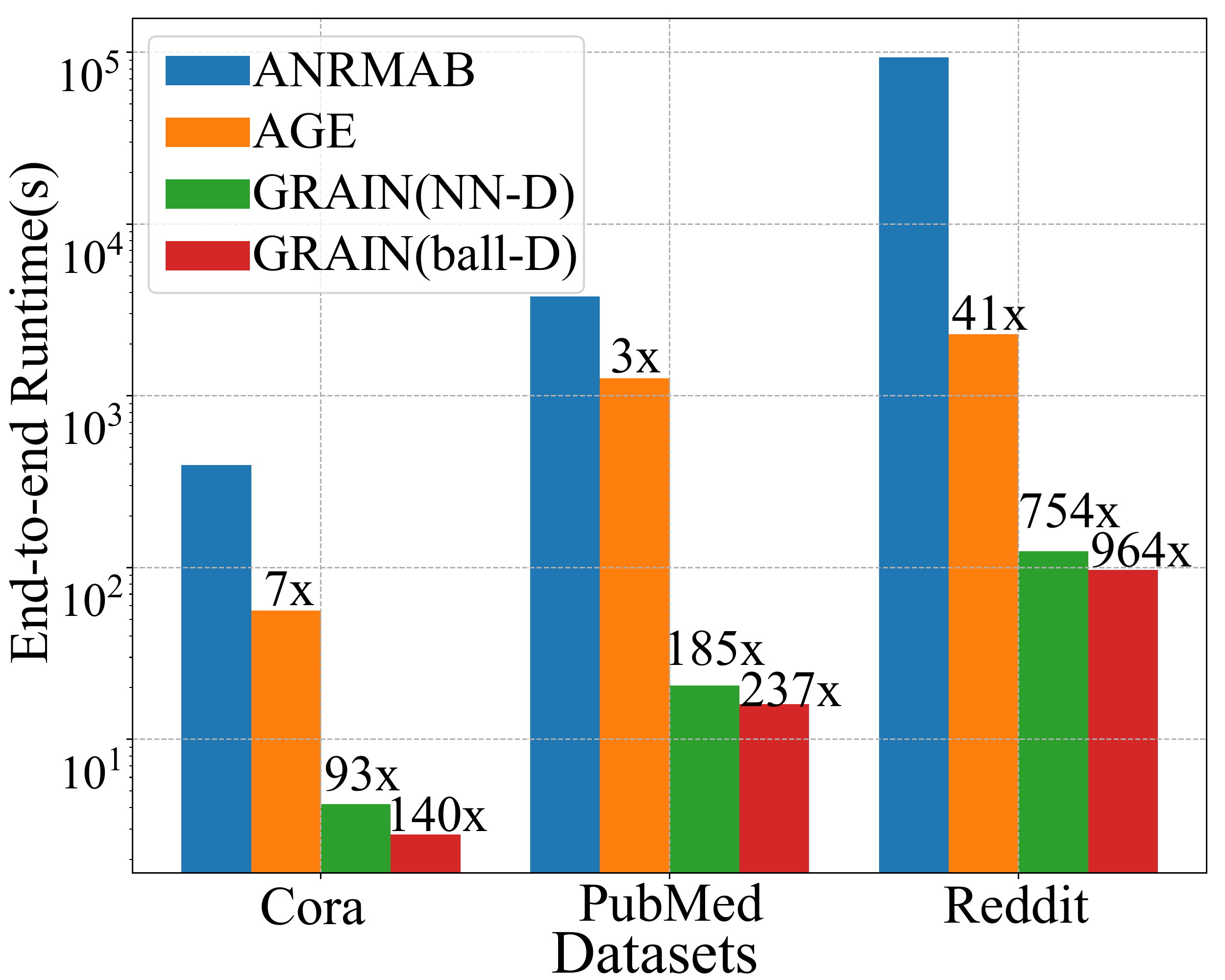}}
\subfigure{
\label{fig.effi_nogpu2}
\includegraphics[width=0.208\textwidth]{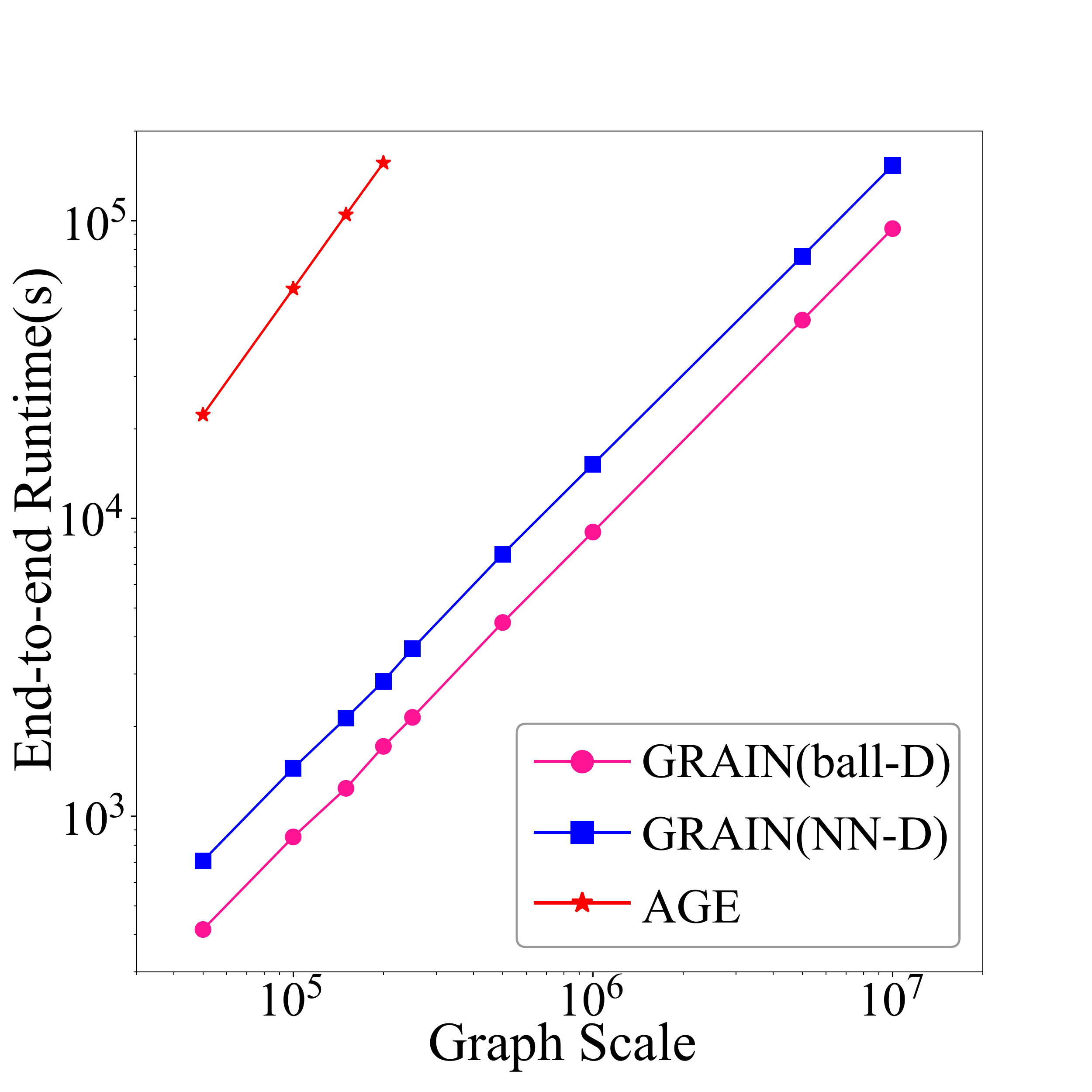}}
\vspace{-5.5mm}
\caption{End-to-end runtime (at log scale) using CPU on Cora, Citeseer, PubMed and ogbn-papers100M with different graph scales (at log scale). The speedup on each bar is relative to ANRMAB.}
\label{speed}
\end{figure}
\para{A.6 Efficiency comparison on CPU}

As \sys is model-free and does not rely on GPU, we also evaluate it on CPU environment and the result is shown in Figure~2.
 Obviously,Figure~2 shows that the speedup grows on all these three datasets. For example on Citeseer, compared to ANRMAB, the speedup of \sys(ball-D) increases from 44× to 237× when the GPU device is unavailable.
 For the 1M graph scale on ogbn-papers100M, the speedup increases from 20× to 69× when we switch from GPU to CPU.
In addition, both \sys(ball-D) and \sys(NN-D) achieve higher speedups on larger graphs. Compared with AGE on ogbn-papers100M, the speedup of \sys(ball-D) increases from 54× to 91× on 500k and 2M on CPU, respectively. 
 Without GPU,  AGE is unable to scale to 2.5M graph scale within two weeks in ogbn-papers100M, while our proposed \sys is model-free and can scale to large graphs easily.

\para{A.7 Core-set selection on other graphs}

Core-set selection starts with a large labeled or unlabeled dataset and aim to find a small subset that accurately approximates the full dataset.
We first train the exact model with the full training labels (i.e., 1208 for Cora and 18217 for PubMed), and get the best test accuracy, which is 86.3\% on Cora, 77.2\% on Citeseer, and 86.5\% on PuMed.
As shown in Figure~\ref{fig.coreset_performance}, we set a gap between exact and approximate models, which ranges from 1\% to 7\%, and then evaluate the number of labeled nodes each core-set selection method requires to achieve the corresponding accuracy gap. 
It is obvious that both \sys(ball-D) and \sys(NN-D) significantly outperform the other baselines and achieve the same accuracy gap using much fewer labeled nodes.
Concretely, to achieve an accuracy gap of 2\% on PubMed, AGE needs 3.2\% of all labeled training nodes while \sys(ball-D) needs only 1\% of these nodes, which means \sys(ball-D) outperform AGE by 3.2$\times$ in terms of data efficiency.

\bibliographystyle{ACM-Reference-Format}
\balance
\bibliography{reference}

\end{document}